\definecolor{darkgreen}{rgb}{0.0,0,0.9}
\let\chapter\section
\DeclareMathAlphabet{\mathpzc}{OT1}{pzc}{m}{it}
\newtheorem{propo}{Proposition}[section]
\newtheorem{lemma}[propo]{Lemma}
\newtheorem{definition}[propo]{Definition}
\newtheorem{coro}[propo]{Corollary}
\newtheorem{thm}[propo]{Theorem}
\newtheorem{remark}[propo]{Remark}
\newcommand{\twonorm}[1]{\left\|#1\right\|_{\ell_2}}
\def\sgn{\mathrm{sgn}}
\def\diag{{\mathrm diag}}
\def\tx{\widetilde{x}}
\def\cZ{{\cal Z}}
\def\cF{{\cal F}}
\def\cR{{\cal R}}
\def\reals{{\mathbb R}}
\def\eps{{\varepsilon}}
\def\prob{{\mathbb P}}
\def\E{{\mathbb E}}
\def\L0{{L_i}}
\def\de{{\rm d}}
\def\<{\langle}
\def\>{\rangle}
\def\diag{{\rm diag}}
\def\hth{\widehat{\theta}}
\def\F{{\sf F}}
\def\ind{{\mathbb I}}
\def \Tr{{\rm tr}}
\def\F{{\sf F}}
\def\normal{{\sf N}}
\def\sT{{\sf T}}
\def\sign{{\rm sign}}
\def\v*{v_i}
\def\T*{T_i}
\def\u*{u_i}
\def\F*{F_i}
\def\ty{\tilde{y}}
\def\tx{\tilde{x}}
\def\th{\theta}
\def\hth{{\widehat{\theta}}}
\def\diag{{\rm diag}}
\def\Q{\mathbb{Q}}
\def\Cpl{{\rm Cpl}}
\newcommand{\qnorm}[1]{\left\|#1\right\|_{\ell_q}}
\newcommand{\rnorm}[1]{\left\|#1\right\|_{\ell_r}}
\def\l1u{W}
\def\cV{\mathcal{V}}
\newcommand{\ajcomment}[1]{}
\newcommand{\labitem}[2]{%
\def\@itemlabel{\text{#1}}
\item
\def\@currentlabel{#1}\label{#2}}
\DeclareMathAlphabet{\mathpzc}{OT1}{pzc}{m}{it}
\def\AR{\mathsf{AR}}
\def\SR{\mathsf{SR}}
\def\tell{\widetilde{\ell}}
\newcommand*\samethanks[1][\value{footnote}]{\footnotemark[#1]}
\author{%
Mohammad Mehrabi\thanks{Data Science and Operations Department, Marshall School of Business, University of Southern California} 
\quad\,
Adel Javanmard\samethanks[1] 
\quad\,
Ryan A. Rossi\thanks{Adobe Research}
\quad\,
Anup Rao\samethanks[2]
\quad\,
Tung Mai\samethanks[2]
}
\begin{document}
\title{Fundamental Tradeoffs in Distributionally Adversarial Training}
\maketitle
\begin{abstract}
Despite the wide empirical success of machine learning algorithms, it is well known that these powerful algorithms can perform poorly on adversarially manipulated data.
Adversarial training is among the most effective techniques to improve the robustness of models against adversarial perturbations. However, the full effect of this approach on models is not well understood. For example, while adversarial training can reduce the adversarial risk (prediction error against an adversary), it sometimes increase standard risk (generalization error when there is no adversary). Even more, such behavior is impacted by various elements of the learning problem, including the size and quality of training data, specific forms of adversarial perturbations in the input, model overparameterization, and adversary's power, among others. In this paper, we focus on \emph{distribution perturbing} adversary framework wherein the adversary can change the test distribution within a neighborhood of the training data distribution. The neighborhood is defined via Wasserstein distance between distributions and the radius of the neighborhood is a measure of adversary's manipulative power. We study the tradeoff between standard risk and adversarial risk and derive the Pareto-optimal tradeoff, achievable over specific classes of models, in the infinite data limit with features dimension kept fixed. We consider three learning settings:
1) Regression with the class of linear models; 2) Binary classification under the Gaussian mixtures data model, with the class of linear classifiers; 3) Regression with the class of random features model (which can be equivalently represented as two-layer neural network with random first-layer weights).
We show that a tradeoff between standard and adversarial risk is manifested in all three settings. 
We further characterize the Pareto-optimal tradeoff curves and discuss how a variety of factors, such as features correlation, adversary's power or the width of two-layer neural network would affect this tradeoff. 
\end{abstract}


\section{Introduction}
 Modern machine learning algorithms, and in particular deep neural networks, have demonstrated breakthrough
empirical performance, and have been deployed in a multitude of applications domains ranging from visual object classification to
speech recognition, robotics, natural language processing and healthcare.
 The common practice to train these models is by empirical loss minimization on the training data. Nonetheless, it has been 
observed that the resulting models are surprisingly vulnerable to minute discrepancies between the test and the training data distributions. There are several well documented examples of such behavior in computer vision and image processing where small imperceptible manipulations of images can significantly compromise the performance of the state-of-the-art classifiers~\cite{szegedy2013intriguing,biggio2013evasion}.
Other examples include well-designed malicious content like malware which can be labeled legitimate by the classifier and harm the system~\cite{chen2017adversarial,papernot2017practical}, or adversarial attacks on speech recognition systems, such as GoogleNow or Siri, which consists in voice commands that are incomprehensible or even completely inaudible to human and can still 
control the systems~\cite{carlini2016hidden,vaidya2015cocaine,zhang2017dolphinattack}. It is evident that in practice such vulnerability can have catastrophic consequences.

By studying adversarial samples, one can in turn improve the robustness of machine learning algorithms against adversarial attacks. In the past few years, there has been a significant research on generating various adversarial samples~\cite{carlini2017adversarial,athalye2018obfuscated,goodfellow2014explaining,papernot2016crafting} and defenses~\cite{DBLP:conf/iclr/MadryMSTV18,cisse2017parseval,papernot2016distillation}.  Among the considerable effort to improve the adversarial robustness of algorithms, adversarial training is one of the most effective techniques. Adversarial training is often formulated as a minimax optimization
problem, where the inner maximization aims to find an adversarial example that maximizes 
a predictive loss function, while the outer minimization aims to train a robust estimator
using the generated adversarial examples~\cite{DBLP:journals/corr/GoodfellowSS14, kurakin2016adversarial, DBLP:conf/iclr/MadryMSTV18, DBLP:conf/iclr/RaghunathanSL18, DBLP:conf/icml/WongK18}.

While adversarial training techniques have been successful in improving the adversarial robustness of the models, their full effect on machine learning systems is not well understood. In particular, some studies~\cite{DBLP:conf/iclr/MadryMSTV18} observed that the robustness virtue of adversarial training comes at the cost of worsening the performance on natural unperturbed inputs, i.e, increasing generalization error. However,~\cite{tsipras2018robustness} observes empirically that when there are very
few training data, adversarial training can help with reducing the generalization error.  Complicating matters
further,~\cite{raghunathan2019adversarial} argues that additional unlabeled data can mitigate the tension between adversarial risk (predictive performance against adversarial perturbations) and the standard risk (predictive performance when there is no adversary, a.k.a generalization error). These observations raise the following important question regarding adversarial training:
\begin{quote}
\emph{Is there a `fundamental' tradeoff between adversarial risk and standard risk? Or do there exist models that are optimal with respect to both of these measures?
What are the roles of different factors, such as adversary's power, problem dimension and the complexity of the model class (e.g., number of neurons) in the interplay between standard risk and adversarial risk?} 
\end{quote}
Here, by `fundamental tradeoff' we mean a tradeoff that holds given unlimited computational power and infinite training data to train a model. In this work, we answer these questions for adversarial distribution shifts, where the adversary can shift the test data distribution, making it different from the training data distribution.
The test data distribution can be an arbitrary but fixed distribution in a neighborhood of the training data distribution and the radius of this neighborhood is in fact a measure of 
adversary's power.  

\medskip
\noindent{\bf Contributions.} We next summarize our contributions in this paper:

\begin{itemize}
\item We characterize the fundamental tradeoff between standard risk and adversarial risk for distributionally adversarial training for the settings of linear regression and binary classification (under a Gaussian mixtures model). We focus on infinite data limit $(n\to \infty)$ with finite feature dimension $(d)$ and hence our analysis is at population level. 
The fundamental tradeoff is characterized by studying the Pareto optimal fronts for the achievability region in the two dimensional standard risk-adversarial risk region. 
The Pareto optimal front consists in the set of estimators for which one cannot decrease both standard and adversarial risk by deviating from these estimators.
Similar tradeoffs have been derived for linear regression setting with norm bounded adversarial perturbation and isotropic Gaussian features~\cite{pmlr-v125-javanmard20a}. Here we focus on distribution perturbing adversaries and consider general anisotropic Gaussian features. 

\item For the binary classification we consider a Gaussian mixtures model with general feature covariance and a distribution perturbing adversary, where the perturbation is measured in terms of the Wasserstein metric with general $\ell_r$ norm. (We refer to Sections~\ref{sec:background} and~\ref{sec:binary} for further details and formal definitions). Our analysis shows how the fundamental tradeoff between standard and adversarial risk is impacted by a variety of factors, such as adversary's power, feature dimension, features correlation and the choice of $\ell_r$ perturbation norm. 
An interesting observation is that for $r=2$ the tradeoff between standard and adversarial risk vanishes. In other words, there exists a model which achieve both the optimal standard risk and the optimal adversarial risk.    

\item We also study the Pareto optimal tradeoffs between the standard and adversarial risks for the problem of learning an \emph{unknown function} over the $d$-dimensional sphere using random features model. This can be represented as linear models with $N$ random nonlinear features of the form $\sigma(w_a^\sT x)$, $1\le a\le N$, with $\sigma(\cdot)$ a nonlinear activation. Equivalently this can be characterized as fitting a two-layer neural network with random first-layer.  Building upon approximation formula for adversarial risk, we study the effect of network width $N$ on the tradeoff between standard and adversarial risks.


\end{itemize}

\subsection{Further related work} Very recent work~\cite{javanmard2020precise,taheri2020asymptotic} have focused on binary classification, under Gaussian mixtures model and proposed a precise characterization of the standard and adversarial risk achieved by a specific class of adversarial training approach~\cite{tsipras2018robustness,madry2017towards}. These work consider an asymptotic regime where the sample size grows in proportion to the problem dimension $d$ and focus on norm bounded adversarial perturbation. In comparison, we consider a fixed $d$, infinite $n$ setting and consider distribution perturbing adversary. Also we focus on fundamental tradeoffs achieved by any linear classifier, while~\cite{javanmard2020precise,taheri2020asymptotic} work with a specific class of saddle point estimator. 
The other work~\cite{dobriban2020provable} also considers norm bounded adversarial perturbation for the classification problem and studies
the optimal $\ell_2$ and $\ell_\infty$ robust linear classifiers  assuming access to the class
averages. Furthermore, it also studies the tradeoff between standard and robust accuracies from a Bayesian perspective by contrasting
this optimal robust classifier with the Bayes optimal classifier in a non-adversarial setting.

%
%
%
%

\section{Problem formulation}
In a classic supervised learning setting, a learner is given $n$ pairs of data points $\{z_i:=(x_i,y_i)\}_{i=1:n}$ with $x_i\in\reals^d$ representing features vectors and $y_i$ the response variables (or labels). The common assumption in supervised machine learning is that the data points $z_i$ are drawn independently and identically from some probability measure $\prob_Z$ defined over the space $\mathcal{Z}:=\mathcal{X}\times \mathcal{Y}$. Given this training data, the learner would like to fit a parametric 
function $f_{\th}$ with $\th \in \reals^d$ to predict the response (label)  on new points $x$. 

A common practice to model fitting is through the empirical risk minimization:
 \begin{equation}\label{eq1}
     \hth=\arg\min\limits_{\th \in \reals^d}^{} \frac{1}{n}\sum\limits_{j=1}^{n}\ell(\th;(x_j,y_j))\,, 
 \end{equation}
with $\ell(\th;(x,y)):=\tell(f_{\th}(x),y)$ and $\tell$ being a loss function which captures the discrepancy between the estimated value $f_{\th}(x)$ and the true response value $y$. 
The performance of the model is then measured in terms of \emph{standard risk} (a.k.a. generalization error), defined as
\begin{align}
\SR(\th) : = \E_{z=(x,y)\sim \prob_Z}\left[\ell(\th;(x,y))\right]\,.
\end{align} 
Standard risk is a population risk and quantifies the expected error on new data points drawn from the same distribution as the training data.

Although the empirical risk minimization is a widely used approach for model learning, it is well known that the resulting models can be highly vulnerable to adversarial perturbations of their inputs, known as adversarial attacks. In fact, seemingly small indiscernible changes to the input feature can significantly degrade the predictive performance of the model.

We next discuss the adversarial setting and two common adversary models that are considered in literature.


\subsection{Adversarial setting}\label{sec:adv-setting}
The adversarial setting can be perceived as a game between the learner and the adversary. Given access to the training data, drawn i.i.d from a common distribution $\prob_Z$, the learner chooses a model $\theta$. Depending on the adversary's budget $\eps$, the adversary chooses a test data point $(\tilde{x},\tilde{y})$ that can deviate from a typical test point according to one of the following models. The performance of the model $\theta$ is then measured in terms of predicting $\tilde{y}$ given the perturbed input $\tilde{x}$.  


\smallskip

\noindent{\bf Norm-bounded perturbations.} In this setting, $\tilde{y} =y$ (no perturbation on the response)
and $\tilde{x} = x+\delta$ where $\delta$ can be an arbitrary vector from $\ell_r$-ball of radius $\eps$. The \emph{adversarial risk} in this case is defined as
\begin{align}
\AR(\th):= \E_{(x,y)\sim \prob_Z} \left[ \sup_{\rnorm{\delta}\le \eps} \ell(\th;(x+\delta, y)) \right]\,.
\end{align}
\smallskip

\noindent{\bf Distribution shift.} In this setting, the adversary can shift the distribution of test data, making it different than the training distribution $\prob_Z$. Specifically, $(\tilde{x},\tilde{y})\sim \Q$ where $\Q\in \mathcal{U}_\eps(\prob_Z)$ denotes an $\eps$- neighborhood of the distribution $\prob_Z$. A popular choice of this neighborhood is via the Wasserstein distance, which is formally defined below. In this case, the \emph{adversarial risk} is defined as
\begin{align}\label{eq:U}
\AR(\th):= \sup_{\Q\in \mathcal{U}_\eps(\prob_Z)}\; \E_{(\tilde{x},\tilde{y})\sim \Q} \left[\ell(\th;(\tilde{x}, \tilde{y})) \right]\,.
\end{align}

 Note that this is a strong notion of adversary as the perturbation is chosen \emph{after} observing both the model $\theta$ and data point $(x,y)$ (in norm-bounded perturbation model) or the training data distribution $\prob_Z$ (in the distribution shift model). 
 
 Our primary focus on this work is on the distribution shift adversary model with Wasserstein metric to measure the distance between distributions. The next section provides a  brief background on the Wasserstein robust loss which will be used later in our work.  

%
%
\subsection{Background on Wasserstein robust loss}\label{sec:background}
Let $\cZ$ be a metric space endowed with metric $d:\cZ\times \cZ \rightarrow \reals_{\geq 0}$. Denote by $\mathcal{P}(\cZ)$ the set of all Borel probability measures on $\cZ$. For a $\Q$-measurable function $f$, the $\mathcal{L}^p(\Q)$-norm of $f$ is defined as
\begin{align}\label{eq:LQ-norm}
\|f\|_{\Q,p} := \begin{cases}
\left(\int_{\mathcal{Z}} |f|^p\;\de \Q \right)^{1/p}\,\quad &\text{ for }p\in[1,\infty)\\
\underset{z\in \mathcal{Z}}{\Q{\text{-ess sup}}}\;\; |f(z)| &\text{ for }p=\infty
\end{cases}
\end{align}
For two distributions $\prob,\Q\in \mathcal{P}(\cZ)$ the Wasserstein distance of order $p$ is given by
\begin{align}\label{eq:Wp-def}
W_p(\prob,\Q):= \inf_{\pi\in\Cpl(\prob,\Q)} \|d\|_{\pi,p}\,,
\end{align}
where the coupling set $\Cpl(\prob,\Q)$ denotes the set of all probability measures $\pi$ on $\mathcal{Z}\times
\mathcal{Z}$ with the first marginal $\pi_1:= \pi(\cdot\times \mathcal{Z}) = \prob$ and the second
marginal $\pi_2:= \pi(\mathcal{Z} \times \cdot) = \Q$.

We use the Wasserstein distance to define the neighborhood set $\mathcal{U}_\eps$ in the distribution shift adversary model. Namely,
\begin{align}\label{eq:U0}
\mathcal{U}_\eps(\prob_Z) : = \{\Q\in \mathcal{P}(\cZ):\; W_p(\prob_Z,\Q) \le \eps\}\,.
\end{align}
 In this case we refer to $\AR(\th)$ given by \eqref{eq:U} as \emph{Wasserstein adversarial risk}. Note that this notion involves a maximization over distributions $\Q\in \mathcal{P}(\cZ)$ which can be daunting. However, an important result from distributional robust optimization which we also use in our characterization of $\AR(\th)$ is that the strong duality holds for this problem under general conditions. The dual problem of \eqref{eq:U} is given by
 \begin{align}\label{eq:dual} 
 \begin{cases} 
 \min_{\gamma\ge0} \Big\{\gamma \eps^p + \E_{\prob_Z} [\phi_\gamma(\theta;z)] \Big\}\,,\quad &p\in[1,\infty),\\
 \E_{z\sim \prob_Z} \Big[\sup_{\tilde{z}\in \mathcal{Z}} \{\ell(\th;\tilde{z}):\; d(z,\tilde{z})\le\eps \}   \Big] &p=\infty\,.
 \end{cases}
 \end{align}
Here $\phi_{\gamma}(\th;z)$ is the robust surrogate for the loss function $\ell(\th;z)$ and is defined as
\begin{align}\label{eq:phi}
\phi_{\gamma}(\th;z_0):=\sup\limits_{z\in\cZ}^{} \{  \ell(\th;z)-\gamma d^p(z,z_0)  \} \,. 
\end{align}
 For $p\in[1,\infty)$ it is shown that strong duality holds if either $\prob_Z$ has finite support or $\mathcal{Z}$ is a Polish space~\cite{gao2016distributionally}. For $p=\infty$,~\cite[Lemma EC.2]{gao2017wasserstein} also shows that strong duality holds if $\prob_Z$ has finite support.
 
 \subsubsection{Regularization effect of Wasserstein adversarial loss}
 It is clear from the definition that $\AR(\th)\ge \SR(\th)$ for any model $\th$. Understanding the tradeoff between standard and adversarial risks is intimately related to the gap $\AR(\th)-\SR(\th)$.  The gap between the Wasserstein adversarial loss and the standard loss has been studied in several settings in the context of distributionally robust optimization (DRO)~\cite{bartl2020robust,gao2017wasserstein}. In particular,~\cite{bartl2020robust,gao2017wasserstein} introduced the notion of \emph{variation of the loss}, denoted as $\cV(\ell)$, as a measure of the magnitude change in the expected loss when the data distribution is perturbed, and showed that the Wasserstein adversarial loss is closely related to regularizing the nominal loss by the variation $\cV(\ell)$ regularizer. The formal definition of the variation of loss, recalled from~\cite{gao2017wasserstein}, is given below.
\begin{definition} (Variation of the loss). Suppose that $\mathcal{Z}$ is a normed space with norm $\|\cdot\|$. Let $\ell$ be a continuous function on $\mathcal{Z}$. Also assume that $\nabla_z \ell$ exists $\prob$-almost everywhere. The variation of loss $\ell$ with respect to $\prob$ is defined as
\begin{align}
\cV_{\prob,q}(\ell): = \begin{cases}
\|\|\nabla_z\ell \|_*\|_{\prob,q}\, \quad &q\in [1,\infty)\,,\\
\underset{z\in \mathcal{Z}}{\prob_Z{\mathrm{-ess sup}} }\; \underset{\tilde{z}\neq z}{\sup}\; \frac{(\ell(\tilde{z})-\ell(z))_+}{\|\tilde{z}-z\|}, & q= \infty\,.
\end{cases}
\end{align}
Here $\|\cdot\|_*$ denotes the dual norm of $\|\cdot\|$ and we recall that $\|\cdot\|_{\prob,q}$ is the $\mathcal{L}^q(\prob)$-norm given by~\eqref{eq:LQ-norm}.
\end{definition}
The following proposition from~\cite{bartl2020robust,gao2017wasserstein} states that the variation of loss captures the first order term of the gap between Wasserstein adversarial risk and standard risk for small $\eps$.
 \begin{propo}\label{pro:approx}
Suppose that the loss $\ell(\th;z)$ is differentiable in the interior of $\mathcal{Z}$ for every $\th$, and $\nabla_z\ell$ is continuous on $\mathcal{Z}$. When $p\in(1,\infty)$, assume that there exists $M,L\ge 0$ such that for every $\th$ and $z,\tilde{z}\in \mathcal{Z}$,
\[
\|\nabla_z\ell(\th;\tilde{z}) - \nabla_z\ell(\th;z)\|_*\le M+L \|\tilde{z}- z\|^{p-1}\,.
\]
When $p=\infty$, assume instead that there exists $M\ge0$ and $\delta_0>0$ such that for every $\th$ and $z,\tilde{z}\in \mathcal{Z}$ with $\|\tilde{z}-z\|<\delta_0$, we have
\[
\|\nabla_z \ell(\th;\tilde{z}) - \nabla_z \ell(\th;z)\|_*\le M\,.
\]
Then, there exists $\bar{\eps}$ such that for all $0\le \eps<\bar{\eps}$ and all $\theta$
\begin{align}\label{eq:gap}
\AR(\th) -\SR(\th) = \eps \cV_{\prob_Z,q}(\ell) + O(\eps^2)\,,
\end{align}
where $\frac{1}{p} + \frac{1}{q} = 1$ and $p$ is the order of Wasserstein distance in defining set $\mathcal{U}_\eps(\prob_Z)$ in the adversarial risk~\eqref{eq:U}. 
 \end{propo}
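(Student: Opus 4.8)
\noindent\emph{Proof outline.} The plan is to start from the strong-duality formula~\eqref{eq:dual}, which (for $p\in(1,\infty)$) turns $\AR(\th)$ into the scalar minimization $\min_{\gamma\ge0}\{\gamma\eps^p+\E_{\prob_Z}[\phi_\gamma(\th;z)]\}$, and then to expand the robust surrogate $\phi_\gamma(\th;z_0)$ of~\eqref{eq:phi} to first order around $z_0$. The guiding heuristic, which the analysis must justify, is that the optimal multiplier is large for small $\eps$ — one expects $\gamma^\star\asymp\eps^{-(p-1)}$ — so the supremum defining $\phi_\gamma(\th;z_0)$ is attained within $O(\eps)$ of $z_0$, a scale on which the linearization of $\ell(\th;\cdot)$ at $z_0$ is accurate.

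For $p\in(1,\infty)$ I would first record the exact value of the ``frozen-gradient'' problem: with $a:=\nabla_z\ell(\th;z_0)$ and $u:=z-z_0$,
\begin{align*}
\sup_{u}\big\{\langle a,u\rangle-\gamma\|u\|^p\big\}=\sup_{t\ge0}\big\{\|a\|_*\,t-\gamma t^p\big\}=c_p\,\gamma^{-1/(p-1)}\,\|a\|_*^{p/(p-1)},\qquad c_p:=(p-1)\,p^{-p/(p-1)},
\end{align*}
the first equality by duality between the norm and its dual norm (optimize the direction of $u$ on each sphere), the second by one-variable calculus in $t=\|u\|$. Since $q=p/(p-1)$, this suggests that $\E_{\prob_Z}[\phi_\gamma(\th;z)]\approx\SR(\th)+c_p\,\gamma^{-1/(p-1)}\,\cV_{\prob_Z,q}(\ell)^q$, using $\cV_{\prob_Z,q}(\ell)^q=\E_{\prob_Z}[\|\nabla_z\ell\|_*^q]$; minimizing $\gamma\mapsto\gamma\eps^p+c_p\,\gamma^{-1/(p-1)}\,\cV_{\prob_Z,q}(\ell)^q$ over $\gamma\ge0$ then gives minimizer $\gamma^\star\asymp\eps^{-(p-1)}$ and minimal value $\SR(\th)+\big[\,p\,(c_p/(p-1))^{(p-1)/p}\,\big]\,\eps\,\cV_{\prob_Z,q}(\ell)$, where the bracketed constant equals $1$ for the stated $c_p$. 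This reproduces the leading term of~\eqref{eq:gap}. The case $p=\infty$ is quicker: \eqref{eq:dual} already reads $\AR(\th)=\E_{\prob_Z}[\sup_{\|\tilde z-z\|\le\eps}\ell(\th;\tilde z)]$, and the inner supremum equals $\ell(\th;z)+\eps\,\|\nabla_z\ell(\th;z)\|_*+O(\eps^2)$ by linearization and norm duality, the remainder being uniform via the hypothesis $\|\nabla_z\ell(\th;\tilde z)-\nabla_z\ell(\th;z)\|_*\le M$ for $\|\tilde z-z\|<\delta_0$; taking $\E_{\prob_Z}$ gives $\AR(\th)-\SR(\th)=\eps\,\cV_{\prob_Z,1}(\ell)+O(\eps^2)$.

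The hard part is upgrading the ``$\approx$'' above to matching two-sided bounds with an $O(\eps^2)$ remainder that is uniform in $\th$. The lower bound on $\phi_\gamma(\th;z_0)$ is easy: evaluate the true supremum at the maximizer $u^\star$ of the frozen problem, which has $\|u^\star\|\asymp\gamma^{-1/(p-1)}$, and control the linearization error there via $\|\nabla_z\ell(\th;z_0+v)-\nabla_z\ell(\th;z_0)\|_*\le M+L\|v\|^{p-1}$. The upper bound needs a localization step: the same gradient bound yields $\ell(\th;z_0+u)\le\ell(\th;z_0)+(\|\nabla_z\ell(\th;z_0)\|_*+M)\|u\|+\tfrac{L}{p}\|u\|^p$, whose right-hand side is beaten by $\gamma\|u\|^p$ once $\|u\|$ exceeds a fixed multiple of $\gamma^{-1/(p-1)}$ (i.e.\ once $\eps$ is small), so the supremum in $\phi_\gamma$ may be restricted to an $O(\eps)$-ball around $z_0$ on which the first-order expansion is valid. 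Finally, these pointwise estimates have to be propagated through the minimization over $\gamma$: applying $\min_\gamma$ to the approximation controls $\AR(\th)$ only from above for free, so the matching lower bound is obtained by restricting $\gamma$ to a window around $\gamma^\star$ — outside of which the objective is too large to be optimal — and using the uniform remainder on that window. The point requiring genuine care (rather than cleverness) is keeping every constant independent of $\th$, which is possible precisely because $M,L$ (resp.\ $M,\delta_0$) in the hypotheses do not depend on $\th$, together with the separate treatment of $p\in(1,\infty)$ and $p=\infty$; the remaining manipulations are calculus.
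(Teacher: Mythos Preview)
The paper does not supply its own proof of this proposition. It is quoted verbatim from the distributionally-robust-optimization literature: the sentence introducing it reads ``The following proposition from~\cite{bartl2020robust,gao2017wasserstein} states that\dots'', and no corresponding proof section appears among the proofs collected at the end of the paper. So there is nothing in the paper to compare your proposal against.

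That said, your outline is the standard route taken in those references. The skeleton---pass to the dual \eqref{eq:dual}, compute the frozen-gradient surrogate exactly via norm duality and one-variable optimization in $t=\|u\|$, identify the scale $\gamma^\star\asymp\eps^{-(p-1)}$, localize the true surrogate to an $O(\eps)$-ball using the growth hypothesis on $\nabla_z\ell$, and then propagate the two-sided estimate through the outer minimization---is precisely how Gao--Chen--Kleywegt and Bartl et al.\ argue. Your constant bookkeeping (the value $c_p=(p-1)p^{-p/(p-1)}$ and the verification that the optimized objective collapses to $\eps\,\cV_{\prob_Z,q}(\ell)$) is correct, and the separate treatment of $p=\infty$ via direct linearization of the inner sup is also the right move. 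The one place your sketch is thin is the uniformity in $\th$ of the $O(\eps^2)$ remainder: you flag it, but note that the localization radius you produce depends on $\|\nabla_z\ell(\th;z_0)\|_*$, which is $\th$-dependent; the cited works handle this either by an explicit moment assumption or by tracking the dependence through the estimates, and a full write-up would need to do the same.
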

  
By virtue of Proposition~\ref{pro:approx}, the Wasserstein adversarial risk can be perceived as a regularized form of the standard risk with regularization given by the variation of the loss. Nonetheless, note that this is only an approximation which captures the first order terms for small adversary's power $\eps$. (See also~\cite[Remark 8]{bartl2020robust} for an upper bound on the gap up to second order terms in $\eps$.) 

In this paper, we consider the settings of linear regression and binary classification. For these settings, only for the special case of $p=1$ (1-Wasserstein) and when the loss is Lipschitz and its derivative converges at $\infty$, it is shown that the gap~\eqref{eq:gap} is linear in $\eps$ and therefore is precisely characterized as $\eps \cV_{\prob_Z,q}(\ell)$. However, as we will consider more common losses for these settings, namely quadratic loss for linear regression and 0-1 loss for classification, such characterization does not apply to our settings and requires a direct derivation of adversarial risk. Later, in Section~\ref{sec:nonlinear} we use the result of Proposition~\ref{pro:approx}  to study the tradeoff between $\SR$ and $\AR$
in the problem of learning an unknown function over the $d$-dimensional sphere $\mathbb{S}^{d-1}$. 
  
\section{Main results}
In this paper we focus on the distribution perturbing adversary and aim at understanding the fundamental tradeoff between standard risk and adversarial risk, which holds regardless of computational power or the size of training data. 
We consider 2-Wasserstein distance ($p=2$) with the metric $d(z,\tilde{z})$ defined as
\begin{align}\label{eq:distance}
d(z,\tilde{z}) = \twonorm{x-\tilde{x}} + \infty\cdot \ind\{y\neq\tilde{y}\}\,,\quad z=(x,y), \quad \tilde{z} = (\tilde{x},\tilde{y})\,,
\end{align}
Therefore, the adversary with a finite power $\eps$ can only perturb the distribution of the input feature $x$, but not $y$. Otherwise, the distance $d(z,\tilde{z})$ becomes infinite and so the Wasserstein distance between the the data distribution $\prob_Z$ and the adversary distribution $\Q$, given by~\eqref{eq:Wp-def}, also becomes infinite. It is worth noting that this choice of $d$ is only for simplicity of presentations and our results in this section can be derived in a straightforward manner for distances that also allow perturbations on the $y$ component.

The following remark relates the two types of adversary discussed in Section~\ref{sec:adv-setting} and follows readily from the definition~\eqref{eq:Wp-def} and Equation~\eqref{eq:LQ-norm}. 
\begin{remark}
For distance $d(\cdot,\cdot)$ given by~\eqref{eq:distance}, the adversary model with norm bounded perturbations correspond to the distribution shifting adversary model with $p=\infty$ Wasserstein distance.  
\end{remark}

\subsection{Linear regression}
We consider the class of linear models to fit to data with quadratic loss $\ell(z;\th) = (y-x^\sT\th)^2$. Our first result is a closed form representation of the Wasserstein adversarial risk~\eqref{eq:U} in this case.
\begin{propo}\label{propo:AR-linear}
Consider the quadratic loss $\ell(z;\th) = (y-x^\sT\th)^2$ and the distribution perturbing adversary with $\mathcal{U}_\eps(\prob_Z)$ given by~\eqref{eq:U0} with $p=2$ and the metric $d$ given by~\eqref{eq:distance}. In this case the adversarial risk $\AR(\th)$ admits the following form:
\begin{align}\label{eq:AR-linear}
\AR(\th) = \left(\sqrt{\E_{\prob_Z} [(y-x^\sT\th)^2]} +\eps \twonorm{\th}\right)^{2}\,.
\end{align}
\end{propo}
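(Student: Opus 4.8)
The plan is to convert the supremum over adversarial distributions in \eqref{eq:U} into a moment optimization problem over random perturbations of the feature vector, and then to solve that problem by two applications of the Cauchy--Schwarz inequality. The starting point is the structure of the metric $d$ in \eqref{eq:distance}: since $d(z,\tilde z)=+\infty$ whenever $y\neq\tilde y$, any coupling $\pi\in\Cpl(\prob_Z,\Q)$ of finite order-$2$ cost must satisfy $\tilde y=y$ $\pi$-almost surely. Hence for any $\Q\in\mathcal{U}_\eps(\prob_Z)$ there is a coupling $\pi$ under which $\tilde x=x+\delta$ for a random vector $\delta$ with $\E_\pi[\twonorm{\delta}^2]\le\eps^2$ (up to arbitrarily small slack coming from the infimum in \eqref{eq:Wp-def}, which is harmless after taking the supremum), and, writing $e:=y-x^\sT\th$ for the residual under $\prob_Z$ and using $\tilde y=y$, one gets $\E_{\Q}[\ell(\th;(\tilde x,\tilde y))]=\E_\pi[(e-\delta^\sT\th)^2]$. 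Therefore
\[
\AR(\th)=\sup\ \E\big[(e-\delta^\sT\th)^2\big],
\]
where the supremum ranges over all joint laws of the pair $(e,\delta)$ whose $e$-marginal is the residual law induced by $\prob_Z$ and which satisfy $\E[\twonorm{\delta}^2]\le\eps^2$. The two degenerate cases $\th=0$ (then $\AR(\th)=\SR(\th)$) and $\E[e^2]=0$ (then take $\delta$ deterministic, equal to $\eps\,\th/\twonorm{\th}$) should be checked separately against \eqref{eq:AR-linear}.

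For the main case, expanding the square gives $\E[(e-\delta^\sT\th)^2]=\E[e^2]-2\E[e\,\delta^\sT\th]+\E[(\delta^\sT\th)^2]$. The last term is at most $\twonorm{\th}^2\,\E[\twonorm{\delta}^2]\le\eps^2\twonorm{\th}^2$ by Cauchy--Schwarz applied pointwise and then in $\mathcal{L}^2(\pi)$; for the middle term, $-\E[e\,\delta^\sT\th]\le \twonorm{\th}\,\E[|e|\twonorm{\delta}]\le \twonorm{\th}\sqrt{\E[e^2]}\sqrt{\E[\twonorm{\delta}^2]}\le \eps\twonorm{\th}\sqrt{\E[e^2]}$, again by Cauchy--Schwarz. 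Adding these bounds yields $\E[(e-\delta^\sT\th)^2]\le \big(\sqrt{\E[e^2]}+\eps\twonorm{\th}\big)^2$, which is the claimed value since $\E[e^2]=\E_{\prob_Z}[(y-x^\sT\th)^2]$. To establish tightness I would take $\delta=-\eps\,e\,\th/\big(\sqrt{\E[e^2]}\,\twonorm{\th}\big)$, i.e.\ $\delta$ anti-aligned with $\th$ with magnitude proportional to $|e|$; this makes all three Cauchy--Schwarz steps simultaneous equalities and satisfies $\E[\twonorm{\delta}^2]=\eps^2$, so the corresponding push-forward distribution $\Q$ lies in $\mathcal{U}_\eps(\prob_Z)$ and attains $\big(\sqrt{\E[e^2]}+\eps\twonorm{\th}\big)^2$.

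The step I expect to require the most care is the reduction in the first paragraph: making rigorous that the $\infty$-valued component of $d$ forces $\tilde y=y$ and that it suffices to optimize over couplings with a vanishing-slack treatment of the $W_2$-infimum (or, alternatively, that an optimal coupling exists on the relevant Polish component $\reals^d$ for each fixed label value). Everything after that reduction is elementary. An alternative route that avoids the primal construction altogether is to invoke strong duality \eqref{eq:dual}: one computes the robust surrogate $\phi_\gamma(\th;(x_0,y_0))=\frac{\gamma}{\gamma-\twonorm{\th}^2}\,(y_0-x_0^\sT\th)^2$ for $\gamma>\twonorm{\th}^2$ (and $+\infty$ otherwise), by noting that for a fixed value of $\delta^\sT\th$ the cheapest $\delta$ points along $\th$, which reduces the inner supremum to a one-dimensional concave maximization; one then minimizes $\gamma\eps^2+\frac{\gamma}{\gamma-\twonorm{\th}^2}\,\E_{\prob_Z}[(y-x^\sT\th)^2]$ over $\gamma$ via the arithmetic--geometric mean inequality, which again gives \eqref{eq:AR-linear}.
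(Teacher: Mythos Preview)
Your proof is correct, and your \emph{main} argument is genuinely different from the paper's. The paper works entirely on the dual side: it computes the robust surrogate $\phi_\gamma(\th;z_0)$ by first observing that for $\gamma<\twonorm{\th}^2$ the inner supremum is $+\infty$ (by moving along the direction $\th$), and for $\gamma\ge\twonorm{\th}^2$ the inner problem is concave with closed-form solution $\phi_\gamma(\th;z_0)=\frac{\gamma}{\gamma-\twonorm{\th}^2}(y_0-x_0^\sT\th)^2$; it then minimizes $\gamma\eps^2+\frac{\gamma}{\gamma-\twonorm{\th}^2}\E_{\prob_Z}[\ell]$ explicitly in $\gamma$ and reads off \eqref{eq:AR-linear}. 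This is exactly the ``alternative route'' you sketch in your final paragraph.

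Your primary route, by contrast, stays on the primal side: you rewrite $\AR(\th)$ as a supremum over perturbations $\delta$ subject to the second-moment budget $\E[\twonorm{\delta}^2]\le\eps^2$, bound it above by two Cauchy--Schwarz applications, and then exhibit the explicit maximizer $\delta=-\eps\,e\,\th/(\sqrt{\E[e^2]}\,\twonorm{\th})$. This is more elementary---it never invokes the strong-duality theorem~\eqref{eq:dual}---and as a bonus it produces the worst-case distribution $\Q$ explicitly, which the dual computation does not directly give. The paper's approach, on the other hand, is the one that generalizes: the same dual machinery is what the paper uses in the classification setting (Proposition~\ref{propo:binary-AR}), where a primal Cauchy--Schwarz argument would not be available. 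Your caveat about the reduction step (forcing $\tilde y=y$ and handling the infimum in $W_2$) is well placed but routine once stated; on $\reals^d$ optimal couplings exist for $W_2$, so no $\eps$-slack argument is strictly needed.
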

To prove Proposition~\ref{propo:AR-linear} we exploit the dual problem~\eqref{eq:dual}. We refer to Section~\ref{proof:AR-linear} for the proof of Proposition~\ref{propo:AR-linear}.  
\medskip
 
\noindent{\bf Pareto optimal curve.} For the linear regression setting, note that the standard risk $\SR(\th)$ and the adversarial risk $\AR(\th)$ are convex functions of $\theta$. (The latter is convex since $\E_\Q[(y-x^\sT\th)^2]$ is convex for any distribution $\Q$ and maximization preserves convexity.) Therefore, we can find (almost) all Pareto optimal points by minimizing a weighted combination of the two risk measures by varying the weight $\lambda$:
\begin{align}\label{eq:weightedSum}
\theta_\lambda : =\arg\min_{\theta} \lambda \SR(\theta) + \AR(\theta)
\end{align}

The Pareto optimal curve is then given by $\{(\SR(\th_\lambda),\AR(\th_\lambda)): \lambda\ge 0\}$.

\begin{thm}\label{thm:pareto-linear}
Consider the setting of Proposition~\ref{propo:AR-linear} with $v:= \E[yx]$, $\sigma^2_y:=\E[y^2]$, and $\Sigma:= \E[xx^\sT]$.
Then the solution $\theta$ of optimization~\eqref{eq:weightedSum} is given either by (i) $\theta_\lambda = 0$ or $(ii)$ $\theta_\lambda = (\Sigma+\gamma_* I)^{-1}v$,
with $\gamma_*$ the fixed point of the following two equations:
\begin{align}
\gamma &= \frac{\eps^2+\eps A}{1+\lambda +\dfrac{\eps}{A}}\,, \label{eq:linear-fixedpoints1}\\ 
A &= \frac{1}{\twonorm{(\Sigma+\gamma I)^{-1}v}} 
\left(\sigma_y^2 + \twonorm{\Sigma^{1/2}(\Sigma+\gamma I)^{-1}v}^2 - 2v^\sT(\Sigma+\gamma I)^{-1} v\right)^{1/2}\,. \label{eq:linear-fixedpoints2} 
\end{align}
In case $(i)$ we have $\SR(\th_\lambda) =\AR(\th_\lambda) =\sigma_y^2$. In case $(ii)$ we have
\begin{eqnarray}
\begin{split}\label{eq:SR-AR_linear}
\SR(\th_\lambda) &= A_*^2 \twonorm{(\Sigma+\gamma_*I)^{-1}v}^2\,,\\
\AR(\th_\lambda) &= (A_*+\eps)^2 \twonorm{(\Sigma+\gamma_*I)^{-1}v}^2\,,
\end{split}
\end{eqnarray}
where $A_*$ is given by~\eqref{eq:linear-fixedpoints2} when $\gamma=\gamma_*$.
\end{thm}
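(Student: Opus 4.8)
The plan is to solve the scalarized convex program~\eqref{eq:weightedSum} directly, using the closed form of $\AR$ from Proposition~\ref{propo:AR-linear}. Write $\SR(\th)=\sigma_y^2-2v^\sT\th+\th^\sT\Sigma\th$ and abbreviate $r(\th):=\sqrt{\SR(\th)}$, $t(\th):=\twonorm{\th}$, so that
\[
g(\th):=\lambda\SR(\th)+\AR(\th)=(1+\lambda)\,r(\th)^2+2\eps\,r(\th)\,t(\th)+\eps^2\,t(\th)^2 .
\]
Since $r$ is the $L^2(\prob_Z)$-norm of the affine map $\th\mapsto y-x^\sT\th$ and $t$ is a norm, both are convex and nonnegative; hence $g=\lambda\SR+(r+\eps t)^2$ is convex and the summand $\eps^2 t^2$ makes $g$ coercive (for $\eps>0$), so a minimizer $\th_\lambda$ exists and any point with $0\in\partial g(\th)$ is a global minimizer. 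This reduces the theorem to identifying the stationary points, which I would do by splitting into the cases $\th_\lambda=0$ and $\th_\lambda\neq0$.

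Case (i) is immediate: if $\th_\lambda=0$ then $\SR(\th_\lambda)=\sigma_y^2$ and, by~\eqref{eq:AR-linear}, $\AR(\th_\lambda)=(\sigma_y+0)^2=\sigma_y^2$. (One can also record, although the statement does not require it, that $0$ is optimal precisely when $(1+\lambda)\twonorm{v}\le\eps\,\sigma_y$, by computing $\partial g(0)=-2(1+\lambda)v+2\eps\sigma_y B_2$.) For case (ii), at an interior point $\th_\lambda\neq0$ with $\SR(\th_\lambda)>0$ — a mild non-degeneracy hypothesis, generically valid — the maps $r,t$ are differentiable and
\[
\tfrac12\nabla g(\th)=(1+\lambda)(\Sigma\th-v)+\eps\Big(\tfrac{t}{r}(\Sigma\th-v)+\tfrac{r}{t}\,\th\Big)+\eps^2\,\th .
\]
Setting this to zero and introducing $A:=r(\th)/t(\th)=\sqrt{\SR(\th)}/\twonorm{\th}>0$, I would collect the $(\Sigma\th-v)$ terms and the $\th$ terms to obtain $\big(1+\lambda+\tfrac{\eps}{A}\big)(\Sigma\th-v)+(\eps^2+\eps A)\,\th=0$; dividing by the positive scalar $1+\lambda+\eps/A$ gives $(\Sigma+\gamma I)\th=v$ with $\gamma=\tfrac{\eps^2+\eps A}{1+\lambda+\eps/A}$, which is exactly~\eqref{eq:linear-fixedpoints1}, and $\gamma>0$ makes $\Sigma+\gamma I$ invertible, so $\th_\lambda=(\Sigma+\gamma I)^{-1}v$. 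Substituting this expression into $\SR(\th_\lambda)=\sigma_y^2-2v^\sT\th_\lambda+\twonorm{\Sigma^{1/2}\th_\lambda}^2$ and into $\twonorm{\th_\lambda}$, and recalling $A=\sqrt{\SR(\th_\lambda)}/\twonorm{\th_\lambda}$, reproduces~\eqref{eq:linear-fixedpoints2}; thus the pair $(\gamma_*,A_*)$ attached to the minimizer automatically solves the coupled system, which in particular yields existence of the fixed point. The risk identities~\eqref{eq:SR-AR_linear} then follow from $\sqrt{\SR(\th_\lambda)}=A_*\twonorm{\th_\lambda}=A_*\twonorm{(\Sigma+\gamma_*I)^{-1}v}$ together with $\AR(\th_\lambda)=(\sqrt{\SR(\th_\lambda)}+\eps\twonorm{\th_\lambda})^2$.

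Once Proposition~\ref{propo:AR-linear} is granted, the computation is essentially routine; the points requiring genuine care are (a) the non-smoothness of $g$ at $\th=0$ (and, implicitly, at points where $\SR$ vanishes), handled by the case split and the non-degeneracy hypothesis, and (b) the fact that scalarizing via~\eqref{eq:weightedSum} actually sweeps out the Pareto front, which rests on the joint convexity of $(\SR,\AR)$ noted before the theorem and also explains the ``almost all'' caveat there (Pareto-optimal points on a non-exposed face of the achievable region are not recovered by any finite $\lambda$). I expect the self-consistency step — checking that re-substituting $\th_\lambda=(\Sigma+\gamma I)^{-1}v$ into the definition of $A$ returns~\eqref{eq:linear-fixedpoints2}, so that the fixed-point system is solvable — to be the only place where one argues rather than merely calculates, and even there solvability is inherited for free from the existence of the minimizer $\th_\lambda$.
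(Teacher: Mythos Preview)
Your proposal is correct and follows essentially the same route as the paper: expand $\lambda\SR+\AR$ via Proposition~\ref{propo:AR-linear}, compute the gradient (using $\nabla\SR(\th)=2(\Sigma\th-v)$), set it to zero, introduce $A=\sqrt{\SR(\th)}/\twonorm{\th}$ and the induced $\gamma$, solve the resulting linear system, and back-substitute. The paper's own proof is in fact terser---it simply lists $\th=0$ and stationary points as ``candidates for global minimizers'' without your additional remarks on convexity, coercivity, the subdifferential at $0$, or the non-degeneracy hypothesis $\SR(\th_\lambda)>0$---so your write-up is, if anything, more careful on those points.
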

The proof of Theorem~\ref{thm:pareto-linear} is given in Section~\ref{proof:thm:pareto-linear}. 

\begin{coro}\label{coro:pareto-linear}
Suppose that data is generated according to linear model $y = x^\sT \th_0+w$ with $w\sim\normal(0,\sigma^2)$ and isotropic features satisfying $\E[xx^\sT] = I_d$. 
Then the solution $\theta_\lambda$ of optimization~\eqref{eq:weightedSum} is given either by (i) $\theta_\lambda = 0$ or $(ii)$ $\th_\lambda = (1+\gamma_*)^{-1}\th_0$, where $\gamma_*$ is the fixed point of the following two equations:
\begin{align}
\gamma &= \frac{\eps^2+\eps A}{1+\lambda +\dfrac{\eps}{A}}\,,\\
A &=  \left(\gamma^2 + (1+\gamma)^2 \frac{\sigma^2}{\twonorm{\th_0}^2} \right)^{1/2}\,. \label{eq:linear:coro:A}
\end{align}
In case $(i)$ we have $\SR(\th_\lambda) =\AR(\th_\lambda) =\sigma^2+\twonorm{\th_0}^2$. In case $(ii)$ we have
\begin{align}
\SR(\th_\lambda) &= A_*^2 (1+\gamma_*)^{-2}\twonorm{\th_0}^2\,,\\
\AR(\th_\lambda) &= (A_*+\eps)^2(1+\gamma_*)^{-2}\twonorm{\th_0}^2\,,
\end{align}
where $A_*$ is given by~\eqref{eq:linear:coro:A} when $\gamma=\gamma_*$.
\end{coro}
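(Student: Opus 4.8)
The plan is to obtain the Corollary as a direct specialization of Theorem~\ref{thm:pareto-linear} to isotropic features and a well-specified linear model. The first step is to compute the three population quantities appearing in the Theorem. Under $y = x^\sT\theta_0 + w$ with $w \sim \normal(0,\sigma^2)$ independent of $x$ and $\E[xx^\sT] = I_d$, I would record that $v := \E[yx] = \E[xx^\sT]\theta_0 + \E[w]\,\E[x] = \theta_0$, that $\sigma_y^2 := \E[y^2] = \theta_0^\sT \E[xx^\sT]\theta_0 + \sigma^2 = \twonorm{\theta_0}^2 + \sigma^2$, and that $\Sigma := \E[xx^\sT] = I_d$. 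Substituting $\Sigma = I_d$ and $v = \theta_0$ into the case $(ii)$ formula $\theta_\lambda = (\Sigma + \gamma_* I)^{-1}v$ of Theorem~\ref{thm:pareto-linear} immediately gives $\theta_\lambda = (1+\gamma_*)^{-1}\theta_0$, as claimed.

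Next I would specialize the two fixed-point equations. Equation~\eqref{eq:linear-fixedpoints1} only involves $\eps,\lambda,A$, so it carries over unchanged. For~\eqref{eq:linear-fixedpoints2}, with $\Sigma = I_d$ and $v = \theta_0$ one has $(\Sigma+\gamma I)^{-1}v = (1+\gamma)^{-1}\theta_0$, hence $\twonorm{(\Sigma+\gamma I)^{-1}v} = (1+\gamma)^{-1}\twonorm{\theta_0}$, $\twonorm{\Sigma^{1/2}(\Sigma+\gamma I)^{-1}v}^2 = (1+\gamma)^{-2}\twonorm{\theta_0}^2$, and $v^\sT(\Sigma+\gamma I)^{-1}v = (1+\gamma)^{-1}\twonorm{\theta_0}^2$. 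Plugging these into the bracket of~\eqref{eq:linear-fixedpoints2} produces $\sigma_y^2 + (1+\gamma)^{-2}\twonorm{\theta_0}^2 - 2(1+\gamma)^{-1}\twonorm{\theta_0}^2 = \sigma^2 + \twonorm{\theta_0}^2\bigl(1 - (1+\gamma)^{-1}\bigr)^2 = \sigma^2 + \twonorm{\theta_0}^2 \gamma^2/(1+\gamma)^2$; the only step that is not purely mechanical is recognizing $1 - 2(1+\gamma)^{-1} + (1+\gamma)^{-2}$ as a perfect square. Dividing by $\twonorm{(\Sigma+\gamma I)^{-1}v}^2 = (1+\gamma)^{-2}\twonorm{\theta_0}^2$ and taking the square root yields $A = \bigl(\gamma^2 + (1+\gamma)^2\sigma^2/\twonorm{\theta_0}^2\bigr)^{1/2}$, which is exactly~\eqref{eq:linear:coro:A}.

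Finally, the risk values follow by the same substitution into~\eqref{eq:SR-AR_linear}. Using $\twonorm{(\Sigma+\gamma_* I)^{-1}v}^2 = (1+\gamma_*)^{-2}\twonorm{\theta_0}^2$, case $(ii)$ gives $\SR(\theta_\lambda) = A_*^2(1+\gamma_*)^{-2}\twonorm{\theta_0}^2$ and $\AR(\theta_\lambda) = (A_*+\eps)^2(1+\gamma_*)^{-2}\twonorm{\theta_0}^2$, and in case $(i)$ one has $\SR(\theta_\lambda) = \AR(\theta_\lambda) = \sigma_y^2 = \sigma^2 + \twonorm{\theta_0}^2$. I do not anticipate any genuine obstacle: the whole argument is the evaluation of the generic formulas of Theorem~\ref{thm:pareto-linear} at $\Sigma = I_d$, $v = \theta_0$, $\sigma_y^2 = \sigma^2 + \twonorm{\theta_0}^2$, with the one place requiring slight care being the factorization of the quadratic-in-$(1+\gamma)^{-1}$ term into a square.
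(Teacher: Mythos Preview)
Your proposal is correct and follows exactly the paper's approach: the paper's own proof simply notes that $v=\theta_0$, $\Sigma=I_d$, and $\sigma_y^2=\sigma^2+\twonorm{\theta_0}^2$, then invokes Theorem~\ref{thm:pareto-linear}. Your write-up just makes the resulting algebraic simplifications explicit.
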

The proof of Corollary~\ref{coro:pareto-linear} is provided in Section~\ref{proof:coro-linear}.

Figure \ref{fig:linear} shows the effect of various parameters on the Pareto optimal tradeoffs between adversarial ($\AR$) and standard risks ($\SR$) in linear regression setting. We consider data generated according to the linear model $y = x^\sT \th_0+w$ with $w\sim\normal(0,1)$ and features $x_i$ sampled i.i.d from $\normal(0,\Sigma)$ where $\Sigma_{i,j}=\rho^{|i-j|}$. Figure \ref{fig:linear:d} demonstrates the role of features dimension $d$ on the Pareto optimal curve for the setting with $\rho=0$ (identity covariance matrix), adversary's power $\varepsilon=1$, and the entries of $\th_0$ generated independently from $\normal(0,1/40)$. Note that by Corollary \ref{coro:pareto-linear}, in the case of isotropic features, standard risk and adversarial risks depend on $\th_0$ only through its $\ell_2$ norm.  The variations in the Pareto-curve here is due to variations in $\twonorm{\th_0}$ as $d$ changes.   

Figure \ref{fig:linear:rho} investigates the role of dependency across features ($\rho$) in the optimal tradeoff between standard and adversarial risks. In this setting $d=10$, $\eps=1$, and $\th_0\sim\frac{1}{\sqrt{d}} \normal(0,I)$. As we see all the curves start from the same point. This can be easily verified by the result of Theorem~\ref{thm:pareto-linear}: For the linear data model $y = x^\sT \th_0+ w$, we have $v = \Sigma \th_0$ and at $\lambda=\infty$, the Pareto-optimal estimator is the minimizer of the standard risk, i.e. $\th_{\lambda=\infty} = \th_0$. Also by~\eqref{eq:linear-fixedpoints1} we have $\gamma_*= 0$, and by \eqref{eq:linear-fixedpoints2} we obtain $A = \sigma/\twonorm{\th_0}$. Plugging these values in~\eqref{eq:SR-AR_linear} we get $\SR(\th_\infty) = \sigma^2$ and $\AR(\th_\infty) = (\sigma+\eps_0\twonorm{\th_0})^2$. Therefore both metrics become independent of $\rho$ at $\lambda=\infty$.

Also looking at the right-most point of the Pareto-curves, corresponding to $\lambda=0$, we see that as $\rho$ increases from small to moderate values, this point moves upward-right, indicating that both standard and adversarial risks increase, but after some value of $\rho$, we start to see a reverse behavior, where standard and adversarial risks start to decrease. 

Finally in Figure \ref{fig:linear:eps} we observe the effect of adversary's budget $\varepsilon$ on the Pareto optimal curve. Here, $d=10$, $\rho=0$, and $\th_0\sim\frac{1}{\sqrt{d}} \normal(0,I)$. Clearly, as $\eps$ grows there is a wider range of Pareto-optimal estimators and the two measures of risks would deviate further from each other.
When $\eps$ becomes smaller, the two measures of standard and adversarial risks get closer to each other and so the Pareto-optimal curve becomes shorter. 
\begin{figure}
	\centering
	\begin{subfigure}[b]{0.32\textwidth}
		\centering
	\includegraphics[scale=0.31]{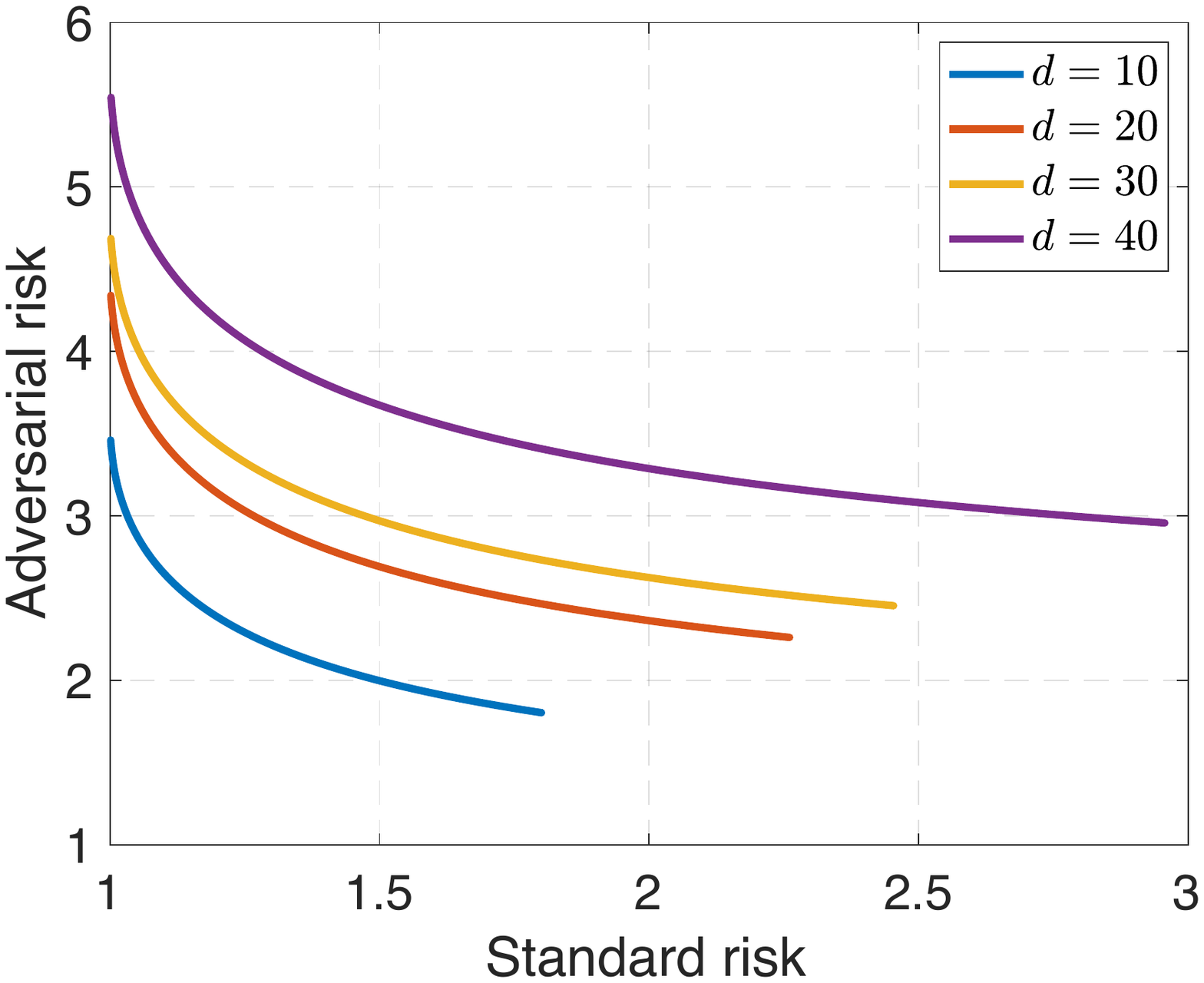}
\caption{Pareto optimal curve for several feature dimensions $d$ with $\rho=0$ and $\eps=1$. }
	\label{fig:linear:d}
	\end{subfigure}
\hfill
\begin{subfigure}[b]{0.32\textwidth}
	\centering
	\includegraphics[scale=0.31]{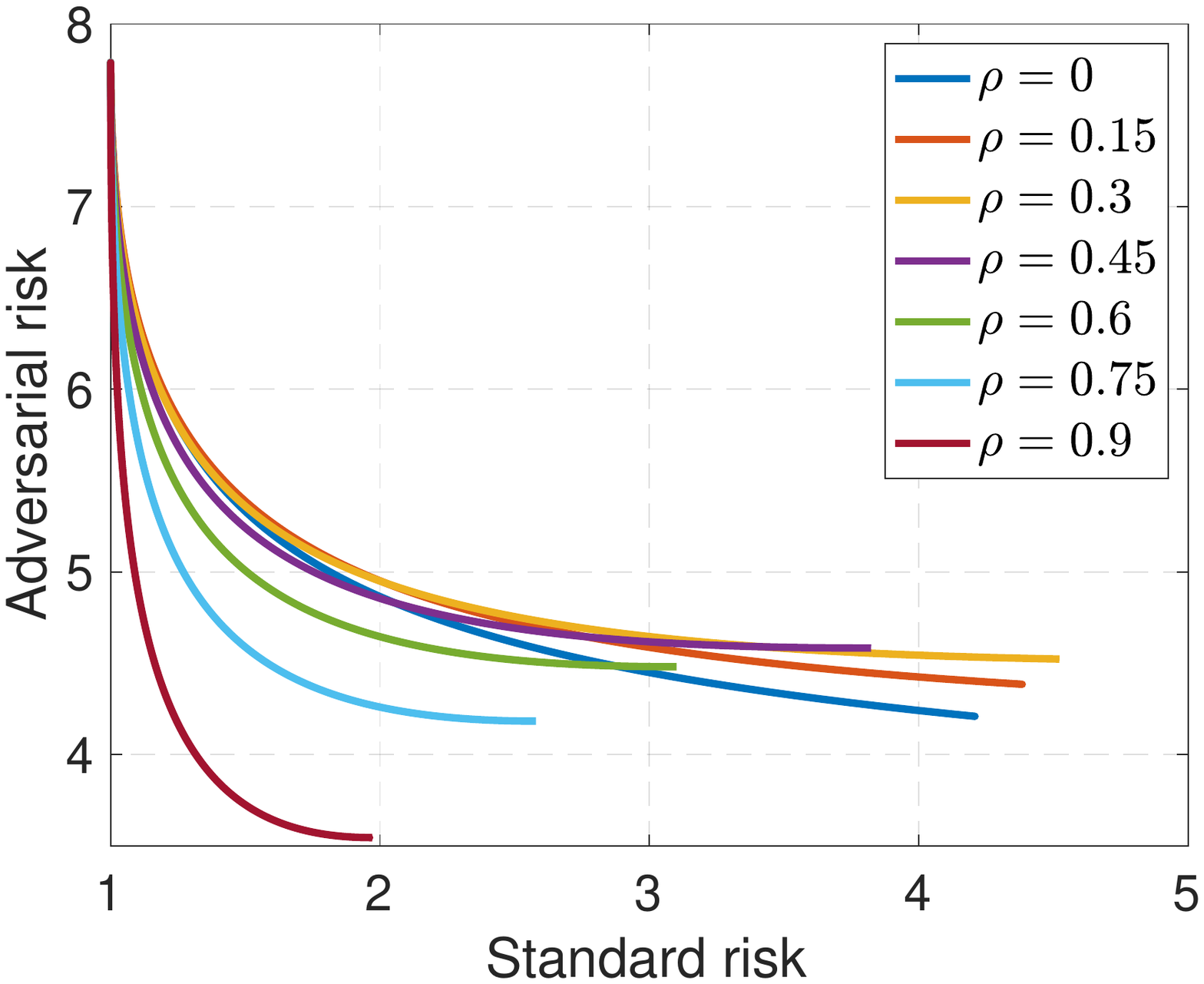}
	\caption{Pareto optimal curve for several feature dependency values $\rho$ with $d=10$ and $\eps=1$.}
	\label{fig:linear:rho}
	\end{subfigure}
\hfill
\begin{subfigure}[b]{0.32\textwidth}
\includegraphics[scale=0.31]{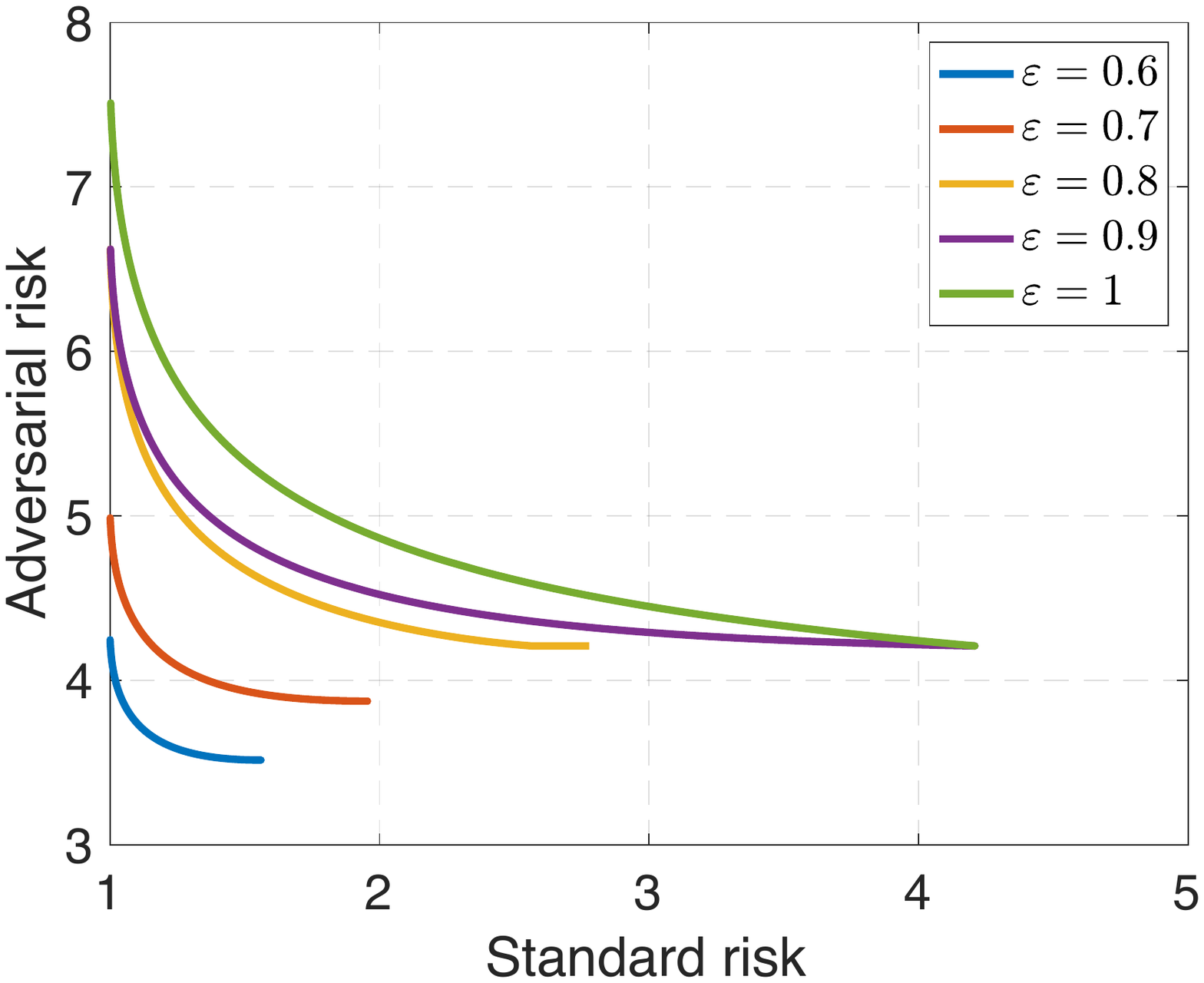}
\caption{Pareto optimal curve for several adversary's power  $\eps$ with $\rho=0$ and $d=10$.}	
\label{fig:linear:eps}	
\end{subfigure}
\caption{The effect of feature dimension ($d$), dependency across features ($\rho$), and adversary's power ($\eps$) on Pareto optimal tradeoff between adversarial ($\AR$) and standard risks ($\SR$) in linear regression setting.}
\label{fig:linear}
\end{figure}

\subsection{Binary classification}\label{sec:binary}
We next consider the problem of binary classification under a Gaussian mixture data model.
Under this model, each data point belongs to one of two two classes $\{\pm1\}$ with corresponding probabilities $\pi_+$, and $\pi_- = 1-\pi_+$. The feature vectors in each class are generated independently according to an isometric Gaussian distribution with mean $\{\pm \mu\}$ depending on the class. In other words, given label $y_i\in \{\pm1\}$, the feature vector $x_i\in\reals^d$ is drawn from $\normal(y_i\mu, \Sigma)$.  

We focus on class of linear classifiers $\{x^\sT\th: \th\in\reals^d\}$. Given a model $\th$ the predicted labels are simply given as $\sign(x^\sT \th)$. We consider 0-1 loss $\ell(\th;z) = \ind(\hat{y}\neq y) = \ind(y x^\sT\th\le0)$.
We also consider Wasserstein adversarial training with distance
\begin{align}\label{eq:distance2}
d(z,\tilde{z}) = \rnorm{x-\tilde{x}} + \infty\cdot \ind\{y\neq\tilde{y}\}\,,\quad z=(x,y), \quad \tilde{z} = (\tilde{x},\tilde{y})\,,
\end{align}

Our next results is on characterizing the standard risk and the Wasserstein adversarial risk for this model.
\begin{propo}\label{propo:binary-AR}
Consider binary classification with Gaussian mixture data model and 0-1 loss. Let $a_\th:= \frac{\mu^\sT\th}{\twonorm{\Sigma^{1/2}\th}}$. Then, for a linear classifier $x\mapsto \sgn(x^\sT\th)$, the standard risk is given by
\[
\SR(\th) = \Phi(-a_\th)\,,
\]
where $\Phi(z) = \frac{1}{\sqrt{2\pi}} \int_{-\infty}^z e^{-\frac{t^2}{2}}\de t$ denotes the c.d.f of a standard Gaussian distribution.

In addition, the Wasserstein adversarial risk with $p=2$ and metric $d$ given by \eqref{eq:distance2} can be characterized as follows
\begin{align}\label{eq:AR-Bin1}
\AR(\th) = \inf_{\gamma\ge0 } \bigg[ &\frac{\gamma}{b_\th}\eps^2+\Phi\Big( \sqrt{\frac{2}{\gamma}}- a_\th\Big) 
\nonumber \\
&+\frac{\gamma}{2}
\left\{\Big(a_\th +\sqrt{\frac{2}{\gamma}}\Big) \varphi\Big(a_\th - \sqrt{\frac{2}{\gamma}}\Big) - a_\th\varphi(a_\th)
+(a_\th^2+1) \Big[\Phi\Big(a_\th - \sqrt{\frac{2}{\gamma}}\Big)  -  \Phi(a_\th)  \Big]\right\}\bigg].
\end{align}
with $b_\th:=\frac{\twonorm{\Sigma^{1/2}\th}^2}{\qnorm{\th}^2}$, $\ell_q$ denoting the dual norm of $\ell_r$ (i.e., $\frac{1}{r}+\frac{1}{q}=1$), and $\varphi(t):=\frac{1}{\sqrt{2\pi}}e^{-\frac{t^2}{2}}$ standing for the  p.d.f of a standard Gaussian distribution.
\end{propo}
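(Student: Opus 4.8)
The standard-risk formula is a direct computation: with $\ell(\th;z)=\ind(yx^\sT\th\le 0)$ we have $\SR(\th)=\prob(yx^\sT\th\le 0)$, and conditioning on either $y=+1$ or $y=-1$ the scalar $yx^\sT\th$ is distributed as $\normal(\mu^\sT\th,\th^\sT\Sigma\th)$, so $\SR(\th)=\Phi\big(-\mu^\sT\th/\twonorm{\Sigma^{1/2}\th}\big)=\Phi(-a_\th)$. For the adversarial risk the plan is to use the strong duality~\eqref{eq:dual} with $p=2$. Since the metric~\eqref{eq:distance2} assigns infinite cost to changing the label, every coupling of finite cost leaves $y$ untouched, so the inner supremum over $\Q$ decouples across the two label-classes and on each class the ambient space is $(\reals^d,\rnorm{\cdot})$, which is Polish; hence the duality of~\cite{gao2016distributionally} applies, using only that the 0-1 loss is upper semicontinuous in $x$. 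This reduces the problem to evaluating $\AR(\th)=\min_{\gamma\ge 0}\{\gamma\eps^2+\E_{\prob_Z}[\phi_\gamma(\th;z)]\}$.

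The key step is to compute the robust surrogate $\phi_\gamma(\th;z_0)=\sup_x\{\ind(y_0x^\sT\th\le 0)-\gamma\rnorm{x-x_0}^2\}$. Because the loss takes only the values $0$ and $1$, this is governed by a dichotomy rather than by a gradient: if $y_0x_0^\sT\th\le 0$ the point is already misclassified and $\phi_\gamma=1$; otherwise the adversary either keeps $x_0$ (value $0$) or pushes $x$ onto the decision hyperplane $\{x^\sT\th=0\}$, and the cheapest such displacement has $\ell_r$-length equal to the point-to-hyperplane distance $y_0x_0^\sT\th/\qnorm{\th}$ by H\"older duality, $\ell_q$ being the dual of $\ell_r$. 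Hence
\[
\phi_\gamma(\th;z_0)=\Big(1-\gamma\,\frac{\big((y_0x_0^\sT\th)_+\big)^2}{\qnorm{\th}^2}\Big)_+ .
\]

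I would then take the expectation over $\prob_Z$. By the same class-symmetry used for $\SR$ it suffices to condition on $y_0=+1$, in which case $g:=\big(x_0^\sT\th-\mu^\sT\th\big)/\twonorm{\Sigma^{1/2}\th}$ is standard normal and $\phi_\gamma$ depends on the data only through $g$: $\E_{\prob_Z}[\phi_\gamma(\th;z)]=\E_g\big[(1-\gamma b_\th(a_\th+g)_+^2)_+\big]$ with $b_\th=\twonorm{\Sigma^{1/2}\th}^2/\qnorm{\th}^2$, and a reparametrization of the dual variable absorbing $b_\th$ accounts for the factor $b_\th$ that multiplies $\eps^2$ in the statement. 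The remaining one-dimensional integral $\E_g[(1-\gamma(a_\th+g)_+^2)_+]$ is evaluated by splitting the range of $g$ into $\{g\le -a_\th\}$ (integrand $\equiv 1$), the band $\{0<a_\th+g<\gamma^{-1/2}\}$ (integrand $1-\gamma(a_\th+g)^2$), and the remaining tail (integrand $\equiv 0$); the middle piece uses the elementary antiderivative $\int u^2\varphi(u-a)\,\de u=(1+a^2)\Phi(u-a)-(u+a)\varphi(u-a)$, and the reflection identity $\Phi(t)+\Phi(-t)=1$ puts the resulting $\Phi$-differences into the form written in~\eqref{eq:AR-Bin1}. The outer $\inf_{\gamma\ge 0}$ is then just the dual minimization.

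The two points I expect to be the genuine obstacles are: (i) justifying strong duality on the non-standard, extended-metric space $\cZ$ — handled above by decoupling over the label-classes, or alternatively by approximating the Gaussian marginals by finitely supported distributions and passing to the limit; and (ii) the case analysis defining $\phi_\gamma$, which must be carried out combinatorially since the 0-1 loss has no usable derivative. Everything after $\phi_\gamma$ is a routine Gaussian-integral computation together with the change of variables in $\gamma$.
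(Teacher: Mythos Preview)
Your proposal is correct and follows essentially the same route as the paper: compute $\SR$ directly from the Gaussian mixture, invoke strong duality~\eqref{eq:dual}, evaluate the robust surrogate $\phi_\gamma$ by the same misclassified/not-misclassified dichotomy together with the H\"older distance-to-hyperplane $y_0x_0^\sT\th/\qnorm{\th}$, take the Gaussian expectation via the substitution $y_0x_0^\sT\th=\twonorm{\Sigma^{1/2}\th}(a_\th+g)$, and finish with the change of dual variable $\gamma\mapsto\gamma/b_\th$. The only cosmetic discrepancy is a factor of $2$: the paper (in its Lemma) writes the penalty as $\tfrac{\gamma}{2}\rnorm{x-x_0}^2$ rather than $\gamma\rnorm{x-x_0}^2$, which is why the final expression carries $\sqrt{2/\gamma}$ instead of your $\gamma^{-1/2}$; one further rescaling of $\gamma$ reconciles the two.
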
 
Note that as an implication of Proposition~\ref{propo:binary-AR}, the standard risk $\SR(\th)$ and the adversarial risk $\AR(\th)$ depend on the estimator $\th$ only through the components $a_\th = \frac{\mu^\sT\th}{\twonorm{\th}}$ and $b_\th=\frac{\twonorm{\Sigma^{1/2}\th}^2}{\qnorm{\th}^2}$.

We next characterize the Pareto optimal front for the region $\{(\SR(\th),\AR(\th)): \th\in\reals^d\}$. 
Since the 0-1 loss $\ind(yx^\sT \th\le0)$ is convex in $\th$, both the standard risk and the adversarial risks are convex functions of $\th$ (by a similar argument given prior to Theorem~\ref{thm:pareto-linear}.)
 
\begin{thm}\label{thm:pareto-binary}
Consider the setting of Proposition~\ref{propo:binary-AR} and define the function $F(\theta,\gamma):\reals^{d+1}\mapsto \reals_{\ge0}$ given by
\begin{align*}
F(\theta,\gamma) = &\frac{\gamma}{b_\th}\eps^2+\Phi\Big( \sqrt{\frac{2}{\gamma}}- a_\th\Big) 
\nonumber \\
&+\frac{\gamma}{2}
\left\{\Big(a_\th +\sqrt{\frac{2}{\gamma}}\Big) \varphi\Big(a_\th - \sqrt{\frac{2}{\gamma}}\Big) - a_\th\varphi(a_\th)
+(a_\th^2+1) \Big[\Phi\Big(a_\th - \sqrt{\frac{2}{\gamma}}\Big)  -  \Phi(a_\th)  \Big]\right\}
\end{align*}
with $a_\th= \frac{\mu^\sT\th}{\twonorm{\Sigma^{1/2}\th}}$ and $b_\th:=\frac{\twonorm{\Sigma^{1/2}\th}^2}{\qnorm{\th}^2}$.
Consider the following minimization problem
\begin{align}\label{eq:binary-weighted}
(\th^\lambda_*,\gamma^\lambda_*) := \arg\min_{\gamma\ge0, \th} \lambda\Phi(-a_\th) + F(\th,\gamma)\,.
\end{align}
The Pareto optimal curve is given by $\{(\Phi(-a_{\th^\lambda_*}),F(\th^\lambda_*,\gamma^\lambda_*)): \lambda\ge 0\}$. 
\end{thm}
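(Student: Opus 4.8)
The plan is to read Theorem~\ref{thm:pareto-binary} as an instance of the scalarization principle for convex bi-objective optimization, in direct analogy with the linear-regression case (Theorem~\ref{thm:pareto-linear} and the discussion following it); the only genuinely new ingredient is eliminating the auxiliary dual variable $\gamma$ supplied by Proposition~\ref{propo:binary-AR}.

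First I would record the identity that collapses the joint minimization over $(\th,\gamma)$ into a weighted-risk minimization over $\th$ alone. By Proposition~\ref{propo:binary-AR} we have $\SR(\th)=\Phi(-a_\th)$ and $\AR(\th)=\inf_{\gamma\ge0}F(\th,\gamma)$, so for every $\lambda\ge0$,
\begin{align*}
\min_{\gamma\ge0,\,\th}\big[\lambda\Phi(-a_\th)+F(\th,\gamma)\big]
&=\min_{\th}\Big[\lambda\SR(\th)+\inf_{\gamma\ge0}F(\th,\gamma)\Big] \\
&=\min_{\th}\big[\lambda\SR(\th)+\AR(\th)\big],
\end{align*}
and at any joint minimizer $(\th^\lambda_*,\gamma^\lambda_*)$ the coordinate $\gamma^\lambda_*$ attains the infimum defining $\AR(\th^\lambda_*)$, so that $F(\th^\lambda_*,\gamma^\lambda_*)=\AR(\th^\lambda_*)$ while $\Phi(-a_{\th^\lambda_*})=\SR(\th^\lambda_*)$. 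Consequently the set displayed in the theorem equals $\{(\SR(\th_\lambda),\AR(\th_\lambda)):\lambda\ge0\}$, where $\th_\lambda$ denotes a minimizer of $\lambda\SR+\AR$. It therefore remains to prove the classical statement that this family traces out the Pareto front of $\{(\SR(\th),\AR(\th)):\th\in\reals^d\}$.

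For the inclusion showing the family lies inside the Pareto front, suppose $\lambda>0$ and some $\th'$ weakly dominates $\th_\lambda$ with a strict improvement in at least one coordinate; then $\lambda\SR(\th')+\AR(\th')<\lambda\SR(\th_\lambda)+\AR(\th_\lambda)$, contradicting optimality, so every point of the family with $\lambda>0$ is Pareto optimal, and the two endpoints ($\lambda=0$, $\lambda\to\infty$) are dealt with by the usual tie-breaking/limiting argument (among minimizers of $\AR$, resp.\ of $\SR$, select one that also minimizes the other objective). For the reverse inclusion I would use convexity: as noted just before the theorem statement, $\SR$ and $\AR$ are convex on $\reals^d$, hence the monotone hull
\begin{align*}
\cR:=\big\{(s,a)\in\reals^2:\ s\ge\SR(\th)\ \text{and}\ a\ge\AR(\th)\ \text{for some }\th\in\reals^d\big\}
\end{align*}
is a convex subset of $\reals^2$, and a pair $(\SR(\th^\star),\AR(\th^\star))$ is Pareto optimal precisely when it lies on the lower-left boundary of $\cR$. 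Applying the supporting hyperplane theorem at such a point produces a nonzero $(\lambda_1,\lambda_2)$ minimizing $\lambda_1 s+\lambda_2 a$ over $\cR$ there; since $\cR$ is closed under increasing either coordinate, necessarily $\lambda_1,\lambda_2\ge0$, and after ruling out $\lambda_2=0$ (which pins down the $\SR$-minimizing endpoint, recovered as $\lambda\to\infty$ exactly as in the remarks after Theorem~\ref{thm:pareto-linear}) we normalize $\lambda_2=1$ and put $\lambda=\lambda_1\ge0$, so that $\th^\star$ minimizes $\lambda\SR+\AR$. Re-introducing the variable $\gamma$ through the first step yields the claimed parametrization $\{(\Phi(-a_{\th^\lambda_*}),F(\th^\lambda_*,\gamma^\lambda_*)):\lambda\ge0\}$.

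The hard part will be the attainment and boundary bookkeeping rather than the scalarization logic itself. One must verify that both the inner $\min_{\gamma\ge0}$ and the outer $\min_\th$ are attained; the latter is not automatic because $a_\th$ and $b_\th$ — and hence $F$, $\SR$, and $\AR$ — are invariant under $\th\mapsto c\th$, so the problem is really an optimization over the bounded range of achievable pairs $(a_\th,b_\th)$ together with $\gamma\in[0,\infty)$. I would compactify this domain, check continuity of the objective up to its boundary (including the limits of $F(\th,\gamma)$ as $\gamma\to0^+$ and $\gamma\to\infty$, where for $\eps>0$ the latter forces $F\to\infty$ and hence confines the inner minimizer to a compact subinterval of $(0,\infty)$), and confirm that the minima are attained so that the identification $F(\th^\lambda_*,\gamma^\lambda_*)=\AR(\th^\lambda_*)$ is legitimate and $\cR$ is closed. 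The convexity of $\SR$ and $\AR$ is invoked as a black box, being asserted in the excerpt preceding the theorem.
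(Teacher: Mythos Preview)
Your proposal is correct and follows essentially the same approach as the paper: the paper's entire proof is the single sentence ``Theorem~\ref{thm:pareto-binary} follows from the fact that the Pareto front of a convex set is characterized by intersection points of the set with the supporting hyperplanes,'' and you have fleshed out exactly this scalarization/supporting-hyperplane argument, together with the (routine) reduction of the joint $(\th,\gamma)$-minimization to the weighted-risk minimization via $\AR(\th)=\inf_{\gamma\ge0}F(\th,\gamma)$. Your treatment is in fact more careful than the paper's, since you flag the attainment and scale-invariance issues and correctly note that convexity of $\SR$ and $\AR$ is being taken from the text preceding the theorem rather than re-derived.
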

Theorem~\ref{thm:pareto-binary} follows from the fact that the Pareto front of a convex set is characterized by intersection points of the set with the supporting hyperplanes. 

\begin{remark}\label{rem:r2}
For $r=q=2$ and $\Sigma =I$, we have $b_\th = 1$. In this case the objective of \eqref{eq:binary-weighted} is decreasing in $a_\th$ and since $|a_\th| \le  \twonorm{\mu}$, it is minimized at  $a_\th=\twonorm{\mu}$. In addition, $\SR(\th)$ is decreasing in $a_\th$ and is minimized at the same value of $a_\th=\twonorm{\mu}$. Therefore, the Pareto-optimal curve shrinks to a single point given by
\begin{align}\label{eq:AR-Bin2}
\SR &= \Phi(-\twonorm{\mu})\,,\\
\AR &= \inf_{\gamma\ge0 } \bigg[ {\gamma}\eps^2+\Phi\Big( \sqrt{\frac{2}{\gamma}}-\twonorm{\mu}\Big) 
\nonumber \\
&\quad\quad\quad+\frac{\gamma}{2}
\bigg\{\Big(\twonorm{\mu} +\sqrt{\frac{2}{\gamma}}\Big) \varphi\Big(\twonorm{\mu} - \sqrt{\frac{2}{\gamma}}\Big)-\twonorm{\mu}\varphi(\twonorm{\mu})\nonumber\\
&\quad\quad \quad\quad\quad\;\;+(\twonorm{\mu}^2+1) \Big[\Phi\Big(\twonorm{\mu} - \sqrt{\frac{2}{\gamma}}\Big)  -  \Phi(\twonorm{\mu})  \Big]\bigg\}\bigg]. \nonumber
\end{align}
In other words, the tradeoff between standard and adversarial risks, achieved by linear classifiers, vanishes in this case and the estimators in direction of the class average $\mu$ are optimal with respect to both standard risk and the Wasserstein adversarial risks. 
\end{remark}
We refer to Section~\ref{proof:binary-r2} for the proof of Remark~\ref{rem:r2}.

Figure \ref{fig:binary} showcases the effect of different factors in a binary classification setting on the Pareto-optimal tradeoff between standard and adversarial risks. 
Here the features $x$ are drawn from $\normal(y\mu,\Sigma)$, with $\Sigma_{ij}=\rho^{|i-j|}$. The class average $\mu$ has i.i.d entries from $\normal(0,1/d)$ with $d=10$. In Figure \ref{fig:binary:q}, we investigate the role of the norm $r$ used in the Wasserstein adversary model, cf. Equation \eqref{eq:distance2}. As discussed in Remark~\ref{rem:r2}, when $r=2$, the tradeoff between standard and adversarial risks vanishes and the estimators in direction of the class average $\mu$ are optimal with respect to both standard risk and the Wasserstein adversarial risks.  

Figure \ref{fig:binary:rho} illustrates the effect of dependency among features $\rho$ on optimal tradeoff between standard and adversarial risks. In this setting $r=\infty$ and $\eps=0.3$. From the result of Theorem~\ref{thm:pareto-binary}, we see that these risks very much depend on the interaction between the class average $\mu$ and the features covariance $\Sigma$ and so the curves are shifted in highly nontrivial way depends on the value of $\rho$ when we fix $\mu$. 

The role of adversary's budget $\eps$ is depicted in figure \ref{fig:binary:eps} in which $r=\infty$, $\rho=0$. Similar to the linear regression setting, when $\eps$ is small the two measures of risk are close to each other and we have a small range of Pareto-optimal models. As $\eps$ grows, the standard risk and the adversarial risks differ significantly and we get a wide range of Pareto-optimal models.

\begin{figure}
	\centering
	\begin{subfigure}[b]{0.32\textwidth}
		\centering
		\includegraphics[scale=0.31]{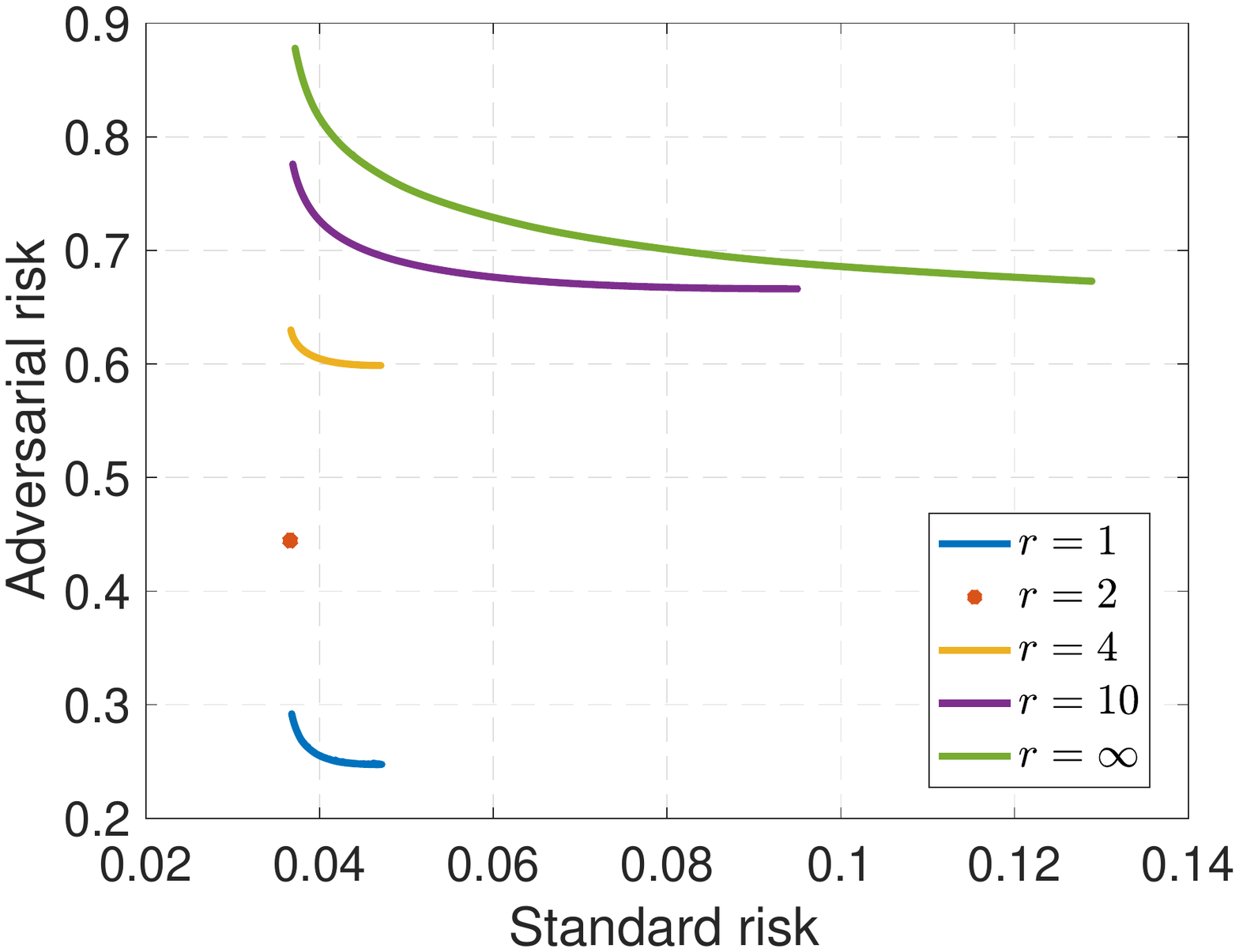}
		\caption{Pareto optimal curve for several $\ell_r$ norms on feature space with $d=10$, $\eps=0.5$ and $\rho=0$. }
		\label{fig:binary:q}
	\end{subfigure}
	\hfill
	\begin{subfigure}[b]{0.32\textwidth}
		\centering
		\includegraphics[scale=0.31]{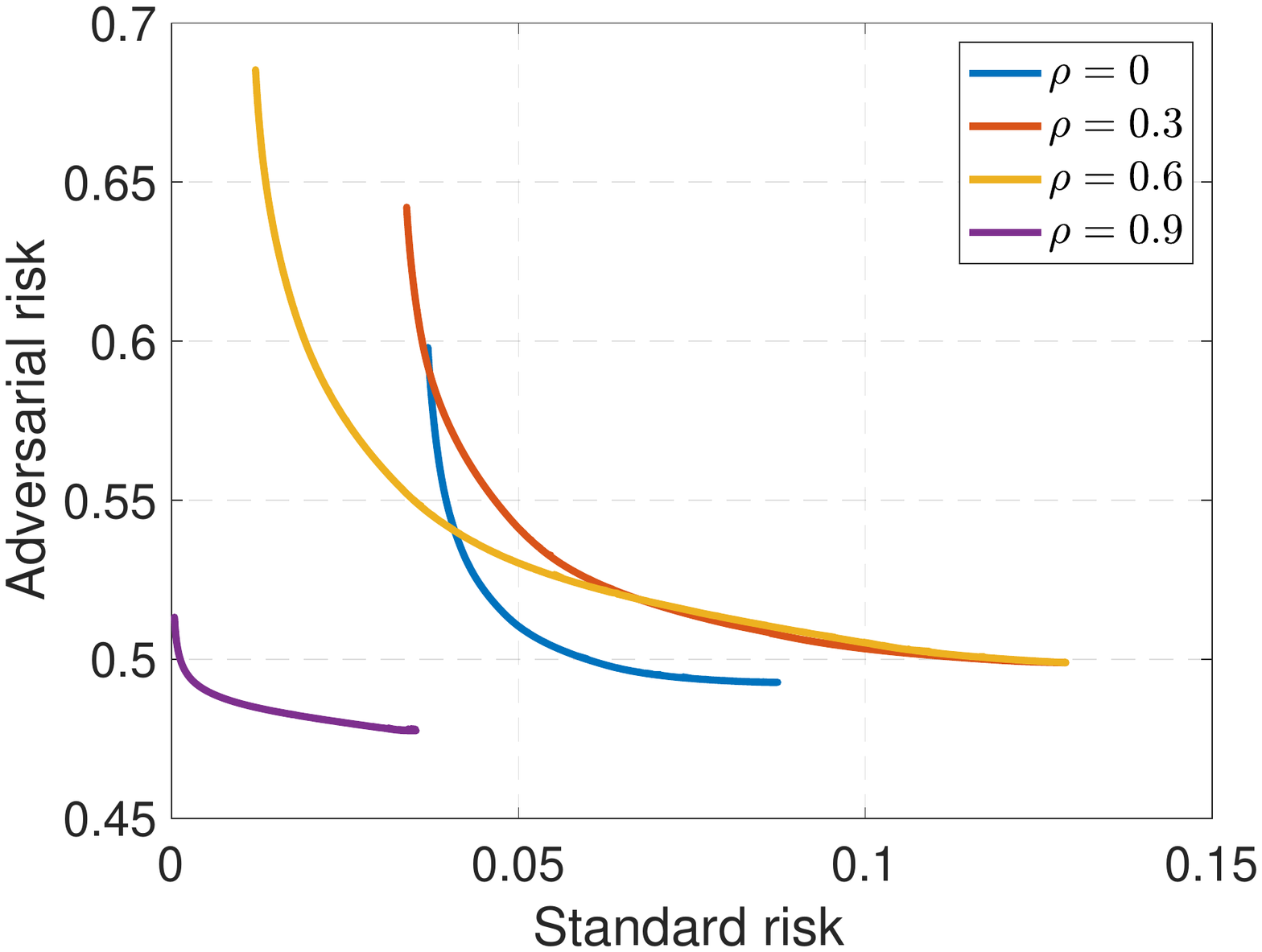}
		\caption{Pareto optimal curve for several feature dependency values $(\rho)$  with $d=10$, $\eps=0.3$, and $r=\infty$. }
		\label{fig:binary:rho}
	\end{subfigure}
	\hfill
	\begin{subfigure}[b]{0.32\textwidth}
		\includegraphics[scale=0.31]{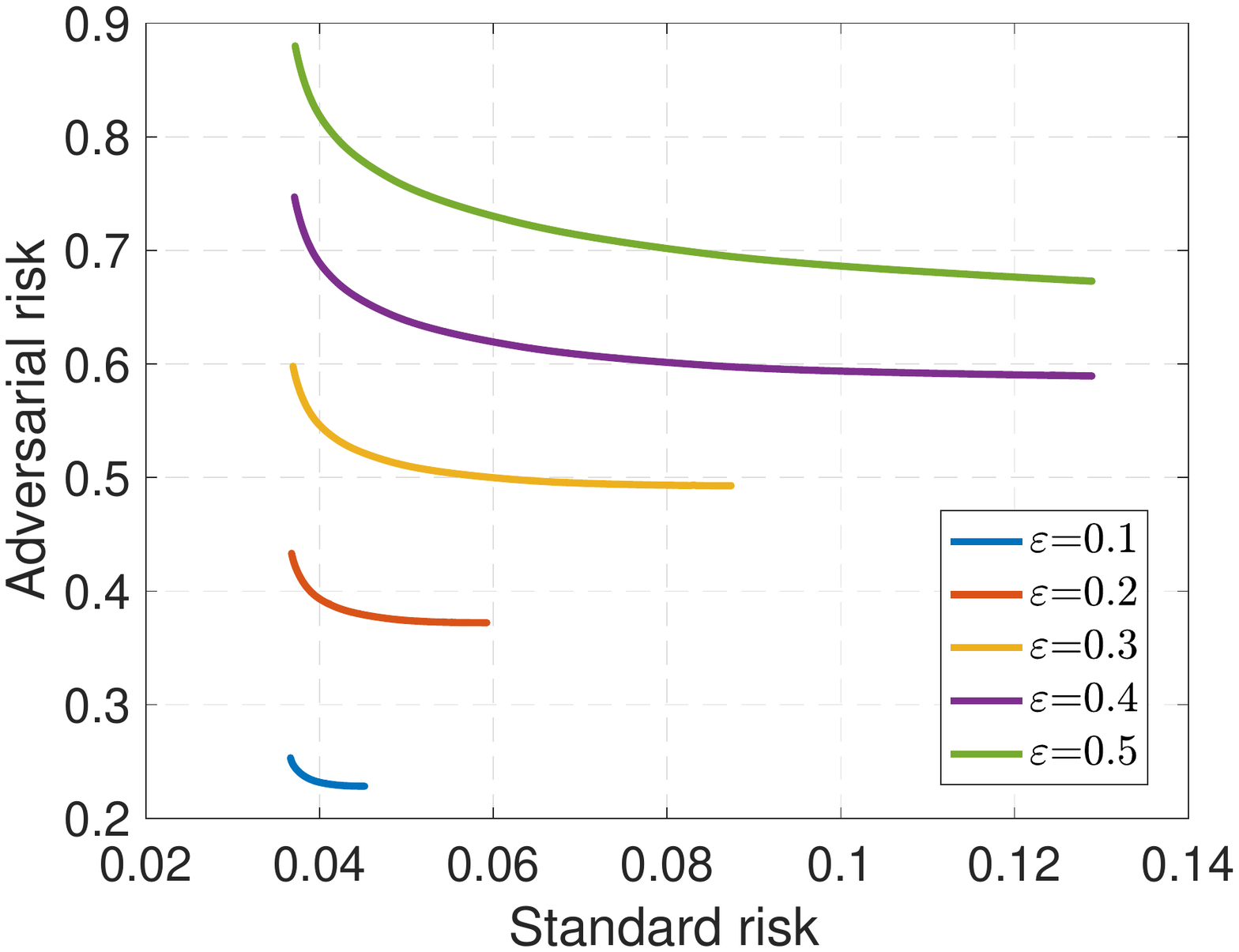}
		\caption{Pareto optimal curve for several adversary's budget $\eps$ with $d=10$, $r=\infty$,  and $\rho=0$.}	
		\label{fig:binary:eps}	
	\end{subfigure}
	\caption{The effect of defined $\ell_r$ norm on feature space, dependency across features ($\rho$) and adversary's power $\eps$ on Pareto optimal tradeoff between adversarial and standard risks in binary classification under Gaussian mixture model.  }
	\label{fig:binary}
\end{figure}

\subsection{Learning nonlinear functions}\label{sec:nonlinear}
We next investigate the tradeoff between standard and adversarial risk for the problem of learning  an unknown function over the $d$-dimensional sphere $\mathbb{S}^{d-1}$.
More precisely, we consider the following data generative model:
\begin{align}\label{eq:f-NL}
y = f_d(x) + w\,,
\end{align}
with $x\sim{{\sf Unif}}(\mathbb{S}^{d-1}(\sqrt{d}))$, the $d$-dimensional sphere of radius $\sqrt{d}$, and $w\sim\normal(0,\sigma^2)$ independent of $x$.
We consider fitting a random features model to data generated according to \eqref{eq:f-NL}. The class of random features model is given by
\begin{align}\label{eq:RF}
\cF_{{\rm RF}}(\theta,U) = \left\{f(x,\theta,U): = \sum_{i=1}^N \theta_i \sigma(u_i^\sT x):\quad \theta_i\in\reals,\; i=1,\dotsc, N \right\}\,,
\end{align}
where $U\in\reals^{N\times d}$ is a matrix whose $i$-th row is the vector $u_i$, uniformly drawn from $\mathbb{S}^{d-1}(1)$, independently from data. 
The random features model can be equivalently represented by two-layer neural network with the first-layer weights $U$ chosen randomly and $\theta =(\theta_i)_{1\le i\le N}$ corresponding to the 
second-layer weights. The random features model was introduced by~\cite{rahimi2007random} for scaling kernel methods to large datasets. There is indeed a substantial literature drawing connections between random features models, kernel methods and fully trained neural networks~\cite{daniely2016toward,daniely2017sgd,jacot2018neural,li2018learning}. 
In~\cite{mei2019generalization}, the generalization error (standard risk) of random features model was precisely characterized for the problem of learning a function $f_d(\cdot)$ over $\mathbb{S}^{d-1}(\sqrt{d})$ in the regime where the network width $N$, sample size $n$ and feature dimension $d$ grow in proportion. The nonlinear model considered in~\cite{mei2019generalization} is of the form
\begin{align}
f_d(x) = \beta_{d,0} + x^\sT \beta_{d,1} +  f_d^{{\rm NL}}(x)\,,
\end{align}  
with the nonlinear component $f_d^{{\rm NL}}(x)$ is a centered isotropic Gaussian process indexed by $x$.  We follow the same model and consider the following random quadratic function
\begin{align}\label{eq:NL-quad}
f_d(x) = \beta_{d,0} + x^\sT \beta_{d,1} + \frac{F_*}{d} [x^\sT Gx - \Tr(G)]\,,
\end{align}
for some fixed $F_*\in\reals$ and $G\in\reals^{d\times d}$ a random matrix with i.i.d entries from $\normal(0,1)$.

Our goal is to study the Pareto-optimal tradeoff between standard and adversarial risks for this learning setting, achieved by the class of random features model~\eqref{eq:RF}.
The standard risk in this setting is given by
\begin{align}\label{eq:SR-NL}
\SR(\th) = \E_{x,y}\left[(y-\theta^\sT \sigma(Ux))^2\right] = \E_{x}\left[  (f_d(x)- \theta^\sT \sigma(Ux) ) ^2\right] +\sigma^2\,.
\end{align}
For the Wasserstein adversarial risk we use the following corollary which is obtained by specializing Proposition \ref{pro:approx} to random features model.

%
%
%
\begin{coro}\label{coro:AR-nonlinear}
Consider the class of feature model given by~\eqref{eq:RF}. In this case, the 2-Wasserstein adversarial risk with distance $d(\cdot,\cdot)$~\eqref{eq:distance} admits the following first-order approximation:
\[
	\AR(\th)= \SR(\th)+ 2\eps\; \E_x\left[\big[(f_d(x)- \theta^\sT \sigma(Ux) ) ^2 +\sigma^2\big] \twonorm{U^\sT \diag(\sigma'(Ux)) \th}^2 \right]^{1/2} +O(\eps^2)\,,
\] 
with $\sigma'(\cdot)$ denoting the derivative of the activation $\sigma(\cdot)$ and $\SR(\th)$ given by~\eqref{eq:SR-NL}.
\end{coro}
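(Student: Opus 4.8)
The plan is to apply Proposition~\ref{pro:approx} directly with $p=2$, hence $q=2$ (since $\tfrac1p+\tfrac1q=1$), and then evaluate the variation of the loss $\cV_{\prob_Z,2}(\ell)$ in closed form for the random-features loss $\ell(\th;z)=(y-\theta^\sT\sigma(Ux))^2$. Because the metric $d$ in~\eqref{eq:distance} forbids perturbing the label $y$ and restricts to the $\ell_2$ norm on the feature coordinate $x$, the dual norm appearing in the definition of $\cV_{\prob_Z,2}$ is again $\ell_2$, so $\|\nabla_z\ell\|_*=\twonorm{\nabla_x\ell}$ and
\[
\cV_{\prob_Z,2}(\ell)^2=\E_{\prob_Z}\!\left[\twonorm{\nabla_x\ell(\th;z)}^2\right].
\]

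Next I would compute $\nabla_x\ell$ by the chain rule. Writing $g(x):=\theta^\sT\sigma(Ux)=\sum_{i=1}^N\theta_i\sigma(u_i^\sT x)$, one has $\nabla_x g(x)=\sum_i\theta_i\sigma'(u_i^\sT x)u_i=U^\sT\diag(\sigma'(Ux))\theta$, hence $\nabla_x\ell(\th;z)=-2\,(y-g(x))\,U^\sT\diag(\sigma'(Ux))\theta$ and $\twonorm{\nabla_x\ell}^2=4\,(y-g(x))^2\,\twonorm{U^\sT\diag(\sigma'(Ux))\theta}^2$. I would then use the data model $y=f_d(x)+w$ with $w\sim\normal(0,\sigma^2)$ independent of $x$: conditioning on $x$ and integrating out $w$ gives $\E_w[(y-g(x))^2\mid x]=(f_d(x)-g(x))^2+\sigma^2$, while the factor $\twonorm{U^\sT\diag(\sigma'(Ux))\theta}^2$ is $w$-free. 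This yields
\[
\cV_{\prob_Z,2}(\ell)=2\left(\E_x\!\Big[\big((f_d(x)-\theta^\sT\sigma(Ux))^2+\sigma^2\big)\,\twonorm{U^\sT\diag(\sigma'(Ux))\theta}^2\Big]\right)^{1/2},
\]
and substituting into the first-order expansion $\AR(\th)=\SR(\th)+\eps\,\cV_{\prob_Z,2}(\ell)+O(\eps^2)$ from~\eqref{eq:gap}, together with the expression~\eqref{eq:SR-NL} for $\SR(\th)$, produces exactly the claimed identity.

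The one genuinely nontrivial point — and the main obstacle — is checking that the hypotheses of Proposition~\ref{pro:approx} are met here, i.e.\ that $\nabla_x\ell$ is continuous and that there are constants $M,L\ge0$ with $\|\nabla_z\ell(\th;\tilde z)-\nabla_z\ell(\th;z)\|_*\le M+L\|\tilde z-z\|$ for all $z,\tilde z$. This is where the structure of the model enters: the inequality is vacuous unless $\tilde y=y$ (otherwise $\|\tilde z-z\|=\infty$), and for fixed $y$ the map $x\mapsto\nabla_x\ell=-2(y-g(x))U^\sT\diag(\sigma'(Ux))\theta$ is smooth once $\sigma\in C^2$ with bounded $\sigma'$ and $\sigma''$; since $x$ ranges over the compact sphere $\mathbb{S}^{d-1}(\sqrt d)$, the only unbounded ingredient is $y$, which enters affinely, so one obtains a bound of the form $M(z)+L\twonorm{\tilde x-x}$ with $L$ uniform and $M(z)$ linear in $|y|$. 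I would argue that this weaker, $z$-dependent bound still suffices to run the argument of Proposition~\ref{pro:approx}, because the Gaussian noise $w$ (hence $y$) has finite second moment, so after taking the expectation over $\prob_Z$ the remainder stays $O(\eps^2)$; alternatively, restricting to activations with globally bounded derivatives makes the expansion immediate. Beyond this regularity bookkeeping, Corollary~\ref{coro:AR-nonlinear} is a direct specialization of Proposition~\ref{pro:approx}.
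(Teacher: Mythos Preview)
Your approach is essentially the paper's: apply Proposition~\ref{pro:approx} with $p=q=2$, compute $\nabla_x\ell$ by the chain rule, and read off $\cV_{\prob_Z,2}(\ell)$; your extra step of integrating out the Gaussian noise $w$ to replace $(y-g(x))^2$ by $(f_d(x)-g(x))^2+\sigma^2$ is exactly what is needed to match the stated form.

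The only place your write-up diverges from the paper is in the regularity verification. The paper does not argue via a $z$-dependent bound plus finite moments, nor does it assume $\sigma\in C^2$; instead it exploits the specific geometry of the setup. Since $x\in\mathbb{S}^{d-1}(\sqrt d)$ and the rows of $U$ lie on $\mathbb{S}^{d-1}(1)$, both $\twonorm{x}$ and $\|U\|_{\rm op}$ are deterministically bounded, and a separate lemma shows that the Pareto-optimal weights $\theta_\lambda$ satisfy $\twonorm{\theta_\lambda}\le c_1$ uniformly in $\lambda$ (with high probability over $U$). With $\theta$ thus restricted to a ball, a triangle-inequality decomposition of $\nabla_x\ell(\th;z)-\nabla_x\ell(\th;\tilde z)$ yields a bound $M+L\twonorm{x-\tilde x}+\sqrt{N}c_1|y-\tilde y|$, and the last term is absorbed into $\infty\cdot\ind_{\{y\neq\tilde y\}}$ coming from the metric $d$. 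Your instinct that the $y$-dependence is the crux is right, but the paper's resolution is structural (compact domain, bounded $\theta$, and the $\infty$-penalty on label changes) rather than moment-based; your alternative of a $z$-dependent $M(z)$ with integrable growth would require reopening the proof of Proposition~\ref{pro:approx}, which the paper avoids.
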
	
The proof of Corollary \ref{coro:AR-nonlinear} is given in Appendix~\ref{proof:coro:AR-nonlinear}.

The standard risk is quadratic and hence convex in $\th$. The adversarial risk is also convex in $\theta$ (it follows from the fact that pointwise maximization preserves convexity.) Therefore, for small values of $\eps$ (weak adversary) the first order approximation of $\AR(\th)$ is also convex in $\theta$. As such, (almost) all Pareto optimal points are given by minimizing a weighted combination of the two risk measures as the weight $\lambda$ varies in $[0,\infty)$:
\begin{align}\label{eq:weightedSumNL}
\theta_\lambda &:=\arg\min_{\theta}\; \lambda \SR(\theta) + \AR(\theta)\nonumber\\
&=\arg\min_{\theta}\;\; (1+\lambda) \SR(\theta) + 2\eps\; \E_x\left[\big[(f_d(x)- \theta^\sT \sigma(Ux) ) ^2 +\sigma^2\big] \twonorm{U^\sT \diag(\sigma'(Ux)) \th}^2 \right]^{1/2}\,.
\end{align}

We use the above characterization to derive the Pareto-optimal tradeoff curves between standard and adversarial risks for learning function $f_d(x)$, given by \eqref{eq:NL-quad}
with $F_*=1$, $\beta_{d,0} = 0$, and $\beta_{d,1}\in \reals^d$ with i.i.d entries $\sim\normal(0,1/d)$. The data are generated according to~\eqref{eq:f-NL} with $\sigma =2$, $d = 10$ and $N\in\{250,500,750,1000\}$. To compute $\theta_\lambda$ we use empirical loss with $n = 500K$ samples of $x\sim{{\sf Unif}}(\mathbb{S}^{d-1}(\sqrt{d}))$. For each value of $\lambda$ and $N$ we generate $15$ realization of weights $U$ and compute $\theta_\lambda$ for each realizations. The Pareto optimal points $\{\SR(\th_\lambda),\AR(\th_\lambda): \, \lambda\ge0\}$ are plotted in Figure~\ref{fig:NL}. As we see  for each value of $N$ the tradeoff curves concentrate as $N$ grows implying that the estimator $\theta_\lambda$ becomes independent of the specific realization of weights $U$. Also we observe that the tradeoff between standard and adversarial risks exist even for large values of $N$. Interestingly, as the network width $N$ grows both the standard risk and adversarial risk decrease but the tradeoff between them clearly remains (the length of Pareto front does not shrink).
 
\begin{figure}
	\centering
\includegraphics[scale=0.5]{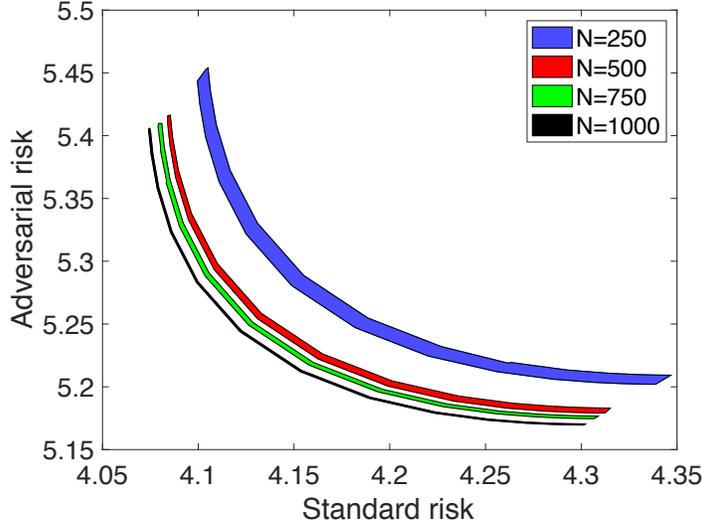}
\caption{Pareto-optimal tradeoff curves for learning random quadratic functions using random features model. Data is generated according to \eqref{eq:f-NL} with $\sigma =2$ and $f_d(x)$ given by \eqref{eq:NL-quad}. Here, $d=10$ and $N$ is the number of random features (width of the neural network).}
\label{fig:NL}
\end{figure} 

\bibliographystyle{alpha}
\bibliography{mybib,Bibfiles2}

\newcommand{\etalchar}[1]{$^{#1}$}
\begin{thebibliography}{MMS{\etalchar{+}}18b}

\bibitem[ACW18]{athalye2018obfuscated}
Anish Athalye, Nicholas Carlini, and David Wagner.
\newblock Obfuscated gradients give a false sense of security: Circumventing
  defenses to adversarial examples.
\newblock {\em arXiv preprint arXiv:1802.00420, International Conference on
  Machine Learning}, 2018.

\bibitem[BCM{\etalchar{+}}13]{biggio2013evasion}
Battista Biggio, Igino Corona, Davide Maiorca, Blaine Nelson, Nedim
  {{S}}rndi{\'c}, Pavel Laskov, Giorgio Giacinto, and Fabio Roli.
\newblock Evasion attacks against machine learning at test time.
\newblock In {\em Joint European conference on machine learning and knowledge
  discovery in databases}, pages 387--402. Springer, 2013.

\bibitem[BDOW20]{bartl2020robust}
Daniel Bartl, Samuel Drapeau, Jan Obloj, and Johannes Wiesel.
\newblock Robust uncertainty sensitivity analysis.
\newblock {\em arXiv preprint arXiv:2006.12022}, 2020.

\bibitem[CBG{\etalchar{+}}17]{cisse2017parseval}
Moustapha Cisse, Piotr Bojanowski, Edouard Grave, Yann Dauphin, and Nicolas
  Usunier.
\newblock Parseval networks: Improving robustness to adversarial examples.
\newblock {\em arXiv preprint arXiv:1704.08847, International Conference on
  Machine Learning}, 2017.

\bibitem[CMV{\etalchar{+}}16]{carlini2016hidden}
Nicholas Carlini, Pratyush Mishra, Tavish Vaidya, Yuankai Zhang, Micah Sherr,
  Clay Shields, David Wagner, and Wenchao Zhou.
\newblock Hidden voice commands.
\newblock In {\em 25th $\{$USENIX$\}$ Security Symposium ($\{$USENIX$\}$
  Security 16)}, pages 513--530, 2016.

\bibitem[CW17]{carlini2017adversarial}
Nicholas Carlini and David Wagner.
\newblock Adversarial examples are not easily detected: Bypassing ten detection
  methods.
\newblock In {\em Proceedings of the 10th ACM Workshop on Artificial
  Intelligence and Security}, pages 3--14, 2017.

\bibitem[CYB17]{chen2017adversarial}
Lingwei Chen, Yanfang Ye, and Thirimachos Bourlai.
\newblock Adversarial machine learning in malware detection: Arms race between
  evasion attack and defense.
\newblock In {\em 2017 European Intelligence and Security Informatics
  Conference (EISIC)}, pages 99--106. IEEE, 2017.

\bibitem[Dan17]{daniely2017sgd}
Amit Daniely.
\newblock Sgd learns the conjugate kernel class of the network.
\newblock In {\em Advances in Neural Information Processing Systems}, pages
  2422--2430, 2017.

\bibitem[DFS16]{daniely2016toward}
Amit Daniely, Roy Frostig, and Yoram Singer.
\newblock Toward deeper understanding of neural networks: The power of
  initialization and a dual view on expressivity.
\newblock In {\em Advances In Neural Information Processing Systems}, pages
  2253--2261, 2016.

\bibitem[DHHR20]{dobriban2020provable}
Edgar Dobriban, Hamed Hassani, David Hong, and Alexander Robey.
\newblock Provable tradeoffs in adversarially robust classification.
\newblock {\em arXiv preprint arXiv:2006.05161}, 2020.

\bibitem[GCK20]{gao2017wasserstein}
Rui Gao, Xi~Chen, and Anton~J Kleywegt.
\newblock Wasserstein distributionally robust optimization and variation
  regularization.
\newblock {\em arXiv preprint arXiv:1712.06050v3}, 2020.

\bibitem[GK16]{gao2016distributionally}
Rui Gao and Anton~J Kleywegt.
\newblock Distributionally robust stochastic optimization with wasserstein
  distance.
\newblock {\em arXiv preprint arXiv:1604.02199}, 2016.

\bibitem[GSS15a]{goodfellow2014explaining}
Ian~J Goodfellow, Jonathon Shlens, and Christian Szegedy.
\newblock Explaining and harnessing adversarial examples.
\newblock {\em arXiv preprint arXiv:1412.6572, International Conference on
  Learning Representations}, 2015.

\bibitem[GSS15b]{DBLP:journals/corr/GoodfellowSS14}
Ian~J. Goodfellow, Jonathon Shlens, and Christian Szegedy.
\newblock Explaining and harnessing adversarial examples.
\newblock In {\em 3rd International Conference on Learning Representations,
  {ICLR} 2015, San Diego, CA, USA, May 7-9, 2015, Conference Track
  Proceedings}, 2015.

\bibitem[JGH18]{jacot2018neural}
Arthur Jacot, Franck Gabriel, and Cl{\'e}ment Hongler.
\newblock Neural tangent kernel: Convergence and generalization in neural
  networks.
\newblock In {\em Advances in neural information processing systems}, pages
  8571--8580, 2018.

\bibitem[JS20]{javanmard2020precise}
Adel Javanmard and Mahdi Soltanolkotabi.
\newblock Precise statistical analysis of classification accuracies for
  adversarial training.
\newblock {\em arXiv preprint arXiv:2010.11213}, 2020.

\bibitem[JSH20]{pmlr-v125-javanmard20a}
Adel Javanmard, Mahdi Soltanolkotabi, and Hamed Hassani.
\newblock Precise tradeoffs in adversarial training for linear regression.
\newblock volume 125 of {\em Proceedings of Machine Learning Research,
  Conference of Learning Theory (COLT)}, pages 2034--2078. PMLR, 09--12 Jul
  2020.

\bibitem[KGB16]{kurakin2016adversarial}
Alexey Kurakin, Ian Goodfellow, and Samy Bengio.
\newblock Adversarial machine learning at scale.
\newblock {\em arXiv preprint arXiv:1611.01236}, 2016.

\bibitem[LL18]{li2018learning}
Yuanzhi Li and Yingyu Liang.
\newblock Learning overparameterized neural networks via stochastic gradient
  descent on structured data.
\newblock In {\em Advances in Neural Information Processing Systems}, pages
  8157--8166, 2018.

\bibitem[MM19]{mei2019generalization}
Song Mei and Andrea Montanari.
\newblock The generalization error of random features regression: Precise
  asymptotics and double descent curve.
\newblock {\em arXiv preprint arXiv:1908.05355}, 2019.

\bibitem[MMS{\etalchar{+}}18a]{DBLP:conf/iclr/MadryMSTV18}
Aleksander Madry, Aleksandar Makelov, Ludwig Schmidt, Dimitris Tsipras, and
  Adrian Vladu.
\newblock Towards deep learning models resistant to adversarial attacks.
\newblock In {\em 6th International Conference on Learning Representations,
  {ICLR} 2018, Vancouver, BC, Canada, April 30 - May 3, 2018, Conference Track
  Proceedings}, 2018.

\bibitem[MMS{\etalchar{+}}18b]{madry2017towards}
Aleksander Madry, Aleksandar Makelov, Ludwig Schmidt, Dimitris Tsipras, and
  Adrian Vladu.
\newblock Towards deep learning models resistant to adversarial attacks.
\newblock {\em arXiv preprint arXiv:1706.06083, International Conference on
  Learning Representations}, 2018.

\bibitem[PMG{\etalchar{+}}17]{papernot2017practical}
Nicolas Papernot, Patrick McDaniel, Ian Goodfellow, Somesh Jha, Z~Berkay Celik,
  and Ananthram Swami.
\newblock Practical black-box attacks against machine learning.
\newblock In {\em Proceedings of the 2017 ACM on Asia conference on computer
  and communications security}, pages 506--519, 2017.

\bibitem[PMSH16]{papernot2016crafting}
Nicolas Papernot, Patrick McDaniel, Ananthram Swami, and Richard Harang.
\newblock Crafting adversarial input sequences for recurrent neural networks.
\newblock In {\em MILCOM 2016-2016 IEEE Military Communications Conference},
  pages 49--54. IEEE, 2016.

\bibitem[PMW{\etalchar{+}}16]{papernot2016distillation}
Nicolas Papernot, Patrick McDaniel, Xi~Wu, Somesh Jha, and Ananthram Swami.
\newblock Distillation as a defense to adversarial perturbations against deep
  neural networks.
\newblock In {\em 2016 IEEE Symposium on Security and Privacy (SP)}, pages
  582--597. IEEE, 2016.

\bibitem[RR07]{rahimi2007random}
Ali Rahimi and Benjamin Recht.
\newblock Random features for large-scale kernel machines.
\newblock {\em Advances in neural information processing systems},
  20:1177--1184, 2007.

\bibitem[RSL18]{DBLP:conf/iclr/RaghunathanSL18}
Aditi Raghunathan, Jacob Steinhardt, and Percy Liang.
\newblock Certified defenses against adversarial examples.
\newblock In {\em 6th International Conference on Learning Representations,
  {ICLR} 2018, Vancouver, BC, Canada, April 30 - May 3, 2018, Conference Track
  Proceedings}, 2018.

\bibitem[RXY{\etalchar{+}}19]{raghunathan2019adversarial}
Aditi Raghunathan, Sang~Michael Xie, Fanny Yang, John~C Duchi, and Percy Liang.
\newblock Adversarial training can hurt generalization.
\newblock {\em arXiv preprint arXiv:1906.06032}, 2019.

\bibitem[SJL18]{soltanolkotabi2018theoretical}
Mahdi Soltanolkotabi, Adel Javanmard, and Jason~D Lee.
\newblock Theoretical insights into the optimization landscape of
  over-parameterized shallow neural networks.
\newblock {\em IEEE Transactions on Information Theory}, 65(2):742--769, 2018.

\bibitem[SZS{\etalchar{+}}14]{szegedy2013intriguing}
Christian Szegedy, Wojciech Zaremba, Ilya Sutskever, Joan Bruna, Dumitru Erhan,
  Ian Goodfellow, and Rob Fergus.
\newblock Intriguing properties of neural networks.
\newblock {\em International Conference on Learning Representations (ICLR)},
  2014.

\bibitem[TPT20]{taheri2020asymptotic}
Hossein Taheri, Ramtin Pedarsani, and Christos Thrampoulidis.
\newblock Asymptotic behavior of adversarial training in binary classification.
\newblock {\em arXiv preprint arXiv:2010.13275}, 2020.

\bibitem[TSE{\etalchar{+}}18]{tsipras2018robustness}
Dimitris Tsipras, Shibani Santurkar, Logan Engstrom, Alexander Turner, and
  Aleksander Madry.
\newblock Robustness may be at odds with accuracy.
\newblock {\em arXiv preprint arXiv:1805.12152}, 2018.

\bibitem[VZSS15]{vaidya2015cocaine}
Tavish Vaidya, Yuankai Zhang, Micah Sherr, and Clay Shields.
\newblock Cocaine noodles: exploiting the gap between human and machine speech
  recognition.
\newblock In {\em 9th $\{$USENIX$\}$ Workshop on Offensive Technologies
  ($\{$WOOT$\}$ 15)}, 2015.

\bibitem[WK18]{DBLP:conf/icml/WongK18}
Eric Wong and J.~Zico Kolter.
\newblock Provable defenses against adversarial examples via the convex outer
  adversarial polytope.
\newblock In {\em Proceedings of the 35th International Conference on Machine
  Learning, {ICML} 2018, Stockholmsm{\"{a}}ssan, Stockholm, Sweden, July 10-15,
  2018}, pages 5283--5292, 2018.

\bibitem[ZYJ{\etalchar{+}}17]{zhang2017dolphinattack}
Guoming Zhang, Chen Yan, Xiaoyu Ji, Tianchen Zhang, Taimin Zhang, and Wenyuan
  Xu.
\newblock Dolphinattack: Inaudible voice commands.
\newblock In {\em Proceedings of the 2017 ACM SIGSAC Conference on Computer and
  Communications Security}, pages 103--117, 2017.

\end{thebibliography}

\section{Proof of theorems and technical lemmas}
\subsection{Proof of Proposition~\ref{propo:AR-linear}}\label{proof:AR-linear}
From the definition of robust surrogate in~\eqref{eq:phi} for the setting of Proposition \ref{propo:AR-linear} we have 
 $$\phi_{\gamma}(\th;z_0):=\sup\limits_{x}^{}  \left\{(y_0-x^\sT\th)^2-\gamma {\twonorm{x-x_0}^2} \right\} \,,$$
 by introducing $g_{\gamma}(x):= (y_0-x^\sT\th)^2-\gamma {\twonorm{x-x_0}^2}$,  for every scalar $c$ we get
\begin{align*}
    &g_{\gamma}(x_0+c\th)= g_{\gamma}(x_0)+ 2c(x_0^\sT\th-y_0)\twonorm{\th}^2 +c^2\twonorm{\th}^2(\twonorm{\th}^2-\gamma)\,,
    \end{align*}
this implies if $\gamma < \twonorm{\th}^2$, then $\phi_{\gamma}(\th;z_0)=+\infty$. Consider $\gamma\geq \twonorm{\th}^2$, then from relation $\nabla^2g_{\gamma}(x)=2(\th\th^\sT-\gamma I)$ we realize that $g_\gamma$ is concave. First order optimal conditions imply 

\[
    x^*=x_0+\frac{x_0^\sT\th-y_0}{\gamma-\twonorm{\th}^2}\th\,.
\]
Replacing $x^* $ in $g_{\gamma}$ yields
\begin{equation}
\phi_{\gamma}(\th;z)=
\begin{cases}
 +\infty &\text{if } \gamma < \twonorm{\th}^2\,,\\
\frac{\gamma(y_0-x^\sT\th)^2}{(\gamma-\twonorm{\th}^2)}&\text{if }\gamma \geq \twonorm{\th}^2 \,. \\ 
\end{cases}
\end{equation}
Then, we use dual formulation~\eqref{eq:dual} to compute Wasserstein adversarial risk:
\begin{align*}
   &\AR(\th):= \sup_{\Q\in \mathcal{U}_\eps(\prob_Z)}\; \E_{z\sim \Q} \left[\ell(\th;z) \right]\\
  &=\inf\limits_{\gamma \geq 0}^{} \{\gamma\eps^2 + \E_{\prob_Z}[\phi_{\gamma}(\th;z)] \} \\
&=\inf\limits_{\gamma \geq \twonorm{\th}^2}^{} \{\gamma\eps^2 + \E_{\prob_Z}[\phi_{\gamma}(\th;z)] \}\\
&=\inf\limits_{\gamma \geq \twonorm{\th}^2}^{} \{\gamma\eps^2 + \frac{\gamma \E_{\prob_Z}[\ell(\th;z)]}{\gamma-\twonorm{\th}^2} \}\,,
\end{align*}
the infimum is achieved at $\gamma^*=\frac{1}{\eps}\sqrt{\E_{\prob_Z}[\ell(\th;z)]}\twonorm{\th}+\twonorm{\th}^2$. Finally, this gives us

\[\AR(\th)=\left(\sqrt{\E_{\prob_Z}[\ell(\th;z)]} + \eps\twonorm{\th}\right)^2 \,.\]


\subsection{Proof of Theorem~\ref{thm:pareto-linear} }\label{proof:thm:pareto-linear}
Define $\cR(\th):=\lambda\SR(\th)+\AR(\th)$. Proposition~\ref{propo:AR-linear} implies
$\AR(\th)=\SR(\th)+2\eps\twonorm{\th}\sqrt{\SR(\th)}+ \eps^2\twonorm{\th}^2$, then by expanding adversarial risk relation $\AR(\th)$ in $\cR(\th)$ we get 

\begin{equation}\label{eq:proof-linear-risk}
 \cR(\th)=  (1+\lambda)\SR(\th)+\eps^2\twonorm{\th}^2+2\eps\twonorm{\th}\sqrt{\SR(\th)}  \,.
\end{equation}
It is easy to see $\SR(\th)=\sigma^2_y+\th^\sT\Sigma\th-2v^\sT\th$. Replace $\nabla_{\th}\SR(\th)=2(\Sigma\th-v)$ in~\eqref{eq:proof-linear-risk} to get  
\begin{equation}\label{eq:linear-grad-cR}
	\nabla_{\th}\cR(\th)=2(1+\lambda)(\Sigma\th-v)+2\eps^2\th+2\eps\left( \frac{\th}{\twonorm{\th}}\sqrt{\SR(\th)} +(\Sigma\th-v) \frac{\twonorm{\th}}{\sqrt{\SR(\th)} }  \right)\,,
\end{equation}
therefore stationary points (solutions of $\nabla_\th\cR(\th)=0$) and a critical point $\th=0$ are candidates for global minimizers. From equation $\SR(\th)=\sigma^2_y+\th^\sT\Sigma\th-2v^\sT\th$ and adversarial risk relation in Proposition~\ref{propo:AR-linear} it is clear that for $\th=0$ we have $\SR(\th)=\AR(\th)=\sigma^2_y$.  Next, we focus on characterizing stationary minimizers of $\cR(\th)$ and their corresponding standard and adversarial risk values. If $\th_*$ is a stationary point, then putting~\eqref{eq:linear-grad-cR} to be zero yields
\begin{equation}\label{eq:proof-linear-stationary}
\left(\left(1+\lambda+\frac{\eps\twonorm{\th_*}}{\sqrt{\SR(\th_*)}}\right)\Sigma+\left(\eps^2+\frac{\eps\sqrt{\SR(\th_*)}}{\twonorm{\th_*}} \right) I  \right )\th_*=\left(1+\lambda+\frac{\eps\twonorm{\th_*}}{\sqrt{\SR(\th_*)}}\right) v\,.
\end{equation}

Introduce $A_*:=  \frac{\sqrt{\SR(\th_*)}}{\twonorm{\th_*}}$ and $\gamma_*:=\frac{\eps^2+\eps A_*}{1+\lambda+\frac{\eps}{A_*}}$, then~\eqref{eq:proof-linear-stationary} can be simplified to
$\th_*=(\Sigma+\gamma_* I)^{-1}v$. By replacing $\th_*=(\Sigma+\gamma_* I)^{-1}v$ in $A_*$ along with equation
$\SR(\th)=\sigma^2_y+\th^\sT\Sigma\th-2v^\sT\th$ we get
\begin{align*}
	&A_*=\frac{\sqrt{\SR((\Sigma+\gamma_* I)^{-1}v)}}{\twonorm{(\Sigma+\gamma_* I)^{-1}v}}\\
	&=\frac{1}{\twonorm{(\Sigma+\gamma_* I)^{-1}v}} \left(\sigma_y^2 + \twonorm{\Sigma^{1/2}(\Sigma+\gamma_* I)^{-1}v}^2 - 2v^\sT(\Sigma+\gamma_* I)^{-1} v\right)^{1/2}\,,
\end{align*}
therefore $\gamma_*$ is a fixed point solution of two equations~\eqref{eq:linear-fixedpoints1} and~\eqref{eq:linear-fixedpoints2}. Moreover, definition of $A_*$ gives us $\SR(\th_*)=A_*^2\twonorm{(\Sigma+\gamma_* I)^{-1}v}^2$. Next, from adversarial risk relation in Proposition~\ref{proof:AR-linear} we know that
$\AR(\th_*)=(\sqrt{\SR(\th_*)}+\eps\twonorm{\th_*} )^2$. This implies $\AR(\th_*)=(A_*+\eps)^2\twonorm{(\Sigma+\gamma_* I)^{-1}v}^2$. 

\subsection{Proof of Corollary~\ref{coro:pareto-linear}}\label{proof:coro-linear}
For linear data model $y=x^\sT\th_0+w$ with isotropic features $\E[xx^T]=I_d$ and Gaussian noise $w\sim\normal(0,\sigma^2)$ we have $\E[xy]=\th_0$. In addition, we have $\E[y^2]=\sigma^2+\twonorm{\th_0}^2$. This gives us  $\sigma^2_y=\sigma^2+\twonorm{\th_o}^2$. Use Theorem \ref{thm:pareto-linear} with $v=\th_0$, $\Sigma=I$, and $\sigma_y^2=\sigma^2+\twonorm{\th_0}^2$ to get Corollary \ref{coro:pareto-linear}.

\subsection{Proof of Proposition~\ref{propo:binary-AR}}\label{proof:pareto-binary}
We start by proving the expression for standard risk. By definition we have
\begin{align}
\SR(\th) &:= \E[\ind(y\neq\hat{y})] = \prob(yx^\sT\th \le0)\nonumber\\
&=\prob\left(y(y\mu+\Sigma^{1/2} u)^\sT\th \le 0\right)\nonumber\\
&=\prob\left((\mu+\Sigma^{1/2} u)^\sT\th \le 0\right)\nonumber\\
&= \prob\left(\mu^\sT\th + \twonorm{\Sigma^{1/2} \theta} \nu \le 0\right)\nonumber\\
&= \Phi\left(-\frac{\mu^\sT\th}{\twonorm{\Sigma^{1/2}\th}}\right)\,,
\end{align}
with $u\sim\normal(0,I_d)$ and $\nu\sim\normal(0,1)$.
To prove the expression for adversarial risk we use the dual form~\eqref{eq:dual}.
Our next lemma characterizes the function $\phi_\gamma$ given by~\eqref{eq:phi} for the binary problem under the Gaussian mixture model. 
\begin{lemma}\label{lem:phi}
Consider the binary classification problem under the Gaussian mixture model with 0-1 loss. Then, the robust surrogate for 
the loss function $\phi_\gamma$ given by~\eqref{eq:phi} with distance $d(\cdot,\cdot)$~\eqref{eq:distance} satisfies
\begin{align*}
\E_{\prob_Z}[\phi_\gamma(\th;z)] = &\Phi\Big( \sqrt{\frac{2}{b_\th\gamma}}- a\Big) +\frac{b_\th\gamma}{2}
\Big\{\Big(a_\th + \sqrt{\frac{2}{b_\th\gamma}}\Big) \varphi\Big(a_\th - \sqrt{\frac{2}{b_\th\gamma}}\Big) - a_\th\varphi(a_\th)\\
&+(a_\th^2+1) \Big[\Phi\Big(a_\th - \sqrt{\frac{2}{b_\th\gamma}}\Big)  -  \Phi(a_\th)  \Big]\Big\}\,,
\end{align*}
with $a_\th=\frac{\mu^\sT\th}{\twonorm{\Sigma^{1/2} \th}}$ and $b_\th = \frac{\twonorm{\Sigma^{1/2}\th}^2}{\qnorm{\th}^2}$.
\end{lemma}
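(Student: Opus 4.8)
The plan is to supply the ingredient missing from the dual representation~\eqref{eq:dual} of $\AR(\th)$: a closed form for the expectation of the robust surrogate $\phi_\gamma$ from~\eqref{eq:phi} under the Gaussian mixture law. I would proceed in three steps: first evaluate $\phi_\gamma(\th;z_0)$ pointwise, then rewrite $\E_{\prob_Z}[\phi_\gamma(\th;z)]$ as a one-dimensional Gaussian integral, and finally evaluate that integral.

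\textbf{Computing $\phi_\gamma$.} Fix $z_0=(x_0,y_0)$. Since the metric~\eqref{eq:distance2} assigns infinite cost to any change of label, the supremum in~\eqref{eq:phi} ranges only over $x$ with the label held at $y_0$, so
\[
\phi_\gamma(\th;z_0)=\sup_{x}\Big\{\ind(y_0\th^\sT x\le 0)-\gamma\rnorm{x-x_0}^2\Big\}.
\]
As the $0$--$1$ loss takes only the values $0$ and $1$, the supremum is decided by whether it pays to move $x_0$ across the decision boundary into the misclassified halfspace $H=\{x:y_0\th^\sT x\le 0\}$. If $x_0\in H$, taking $x=x_0$ already gives the maximal value $1$. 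If $x_0\notin H$, the cheapest admissible move reaches $H$ with the least $\ell_r$ displacement; using $\min_{\rnorm{\delta}\le t}\th^\sT\delta=-t\,\qnorm{\th}$ (where the dual norm $\ell_q$ appears), this displacement has squared length $(y_0\th^\sT x_0)^2/\qnorm{\th}^2$, so $\phi_\gamma(\th;z_0)=\big(1-\gamma\,(y_0\th^\sT x_0)^2/\qnorm{\th}^2\big)_+$. The two cases merge into $\phi_\gamma(\th;z_0)=\big(1-\gamma\,((y_0\th^\sT x_0)_+)^2/\qnorm{\th}^2\big)_+$, up to the normalization chosen for the squared transport cost (which only rescales $\gamma$).

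\textbf{Reduction to a scalar integral and its evaluation.} Conditionally on $y_0$ we have $x_0\sim\normal(y_0\mu,\Sigma)$, hence $y_0\th^\sT x_0=\th^\sT(y_0x_0)\sim\normal\big(\mu^\sT\th,\twonorm{\Sigma^{1/2}\th}^2\big)$ for either sign of $y_0$. Dividing by $\qnorm{\th}$, the normalized margin has mean $\mu^\sT\th/\qnorm{\th}=a_\th\sqrt{b_\th}$ and variance $b_\th$, so it is distributed as $\sqrt{b_\th}\,(a_\th+\nu)$ with $\nu\sim\normal(0,1)$; substituting gives $\E_{\prob_Z}[\phi_\gamma(\th;z)]=\E_{\nu}\big[(1-\gamma b_\th((a_\th+\nu)_+)^2)_+\big]$. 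To evaluate this, partition $\reals$ by the value of $\nu$: on $\{\nu\le-a_\th\}$ the integrand equals $1$ (contributing $\Phi(-a_\th)$); on the band $\{-a_\th<\nu<t_\th\}$, where $t_\th$ is the point at which $\gamma b_\th(a_\th+\nu)^2=1$, it equals $1-\gamma b_\th(a_\th+\nu)^2$; above $t_\th$ it is $0$. On the band, set $t=a_\th+\nu$ and integrate term by term using the antiderivatives $\Phi(t)$, $-\varphi(t)$, and $\Phi(t)-t\varphi(t)$ of $\varphi(t)$, $t\varphi(t)$, and $t^2\varphi(t)$; re-expanding $t=a_\th+\nu$ and collecting the boundary terms at the two endpoints assembles exactly the stated combination of $\Phi$'s and $\varphi$'s, with $t_\th-a_\th=\sqrt{2/(b_\th\gamma)}$ and overall factor $b_\th\gamma/2$ once the transport cost is normalized as in the statement.

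\textbf{Main obstacle.} The one genuinely delicate step is computing $\phi_\gamma$: reducing the supremum to the $\ell_r$ point-to-halfspace distance for a general norm — getting the dual exponent $q$ right and handling the case where $x_0$ is already misclassified — and, for $p=2$, keeping track of the constant in the transport cost so the final factors match. Once $\phi_\gamma$ is in hand, the remaining steps are routine: one change of variables, three standard Gaussian antiderivatives, and careful bookkeeping of the endpoint contributions.
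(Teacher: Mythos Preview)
Your proposal is correct and follows essentially the same route as the paper: compute $\phi_\gamma$ by reducing to the $\ell_r$ distance from $y_0x_0$ to the halfspace $\{v:\th^\sT v\le0\}$ via H\"older duality, then use the Gaussian mixture structure to write $y_0\th^\sT x_0\sim\normal(\mu^\sT\th,\twonorm{\Sigma^{1/2}\th}^2)$ and evaluate the resulting one-dimensional integral. Your compact formula $\phi_\gamma(\th;z_0)=\big(1-\gamma\,((y_0\th^\sT x_0)_+)^2/\qnorm{\th}^2\big)_+$ is exactly the paper's two-case expression, and the paper indeed uses the $\tfrac{\gamma}{2}$ normalization (hence the $\sqrt{2/(b_\th\gamma)}$ cutoff) that you flagged.
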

\begin{proof}[Proof (Lemma~\ref{lem:phi})]
By definition of the $\phi_\gamma$ function, for the setting of Lemma~\ref{lem:phi} we have
\[
\phi_\gamma(\th;z_0) = \sup_{x} \{\ind(y_0x^\sT\th \le0) - \frac{\gamma}{2} \rnorm{x-x_0}^2\}\,.
\]
We let $v_0:= y_0x_0$ and $v = y_0 x$. Given that $y_0\in\{\pm1\}$, the function $\phi_\gamma$ can be written as
\[
\phi_\gamma(\th;z_0) = \sup_{v} \{\ind(v^\sT\th \le0) - \frac{\gamma}{2} \rnorm{v-v_0}^2\}\,.
\]
First observe that by choosing $x = x_0$, we obtain $\phi_\gamma(\th,z_0)\ge 0$.
It is also clear that $\phi_\gamma(\th,z_0)\le 1$.
We consider two cases.

{\bf Case 1:} ($v_0^\sT\th \le 0$). By choosing $v= v_0$ we obtain that $\phi_\gamma(\th;z_0) \ge 1$ and hence $\phi_\gamma(\th;z_0) = 1$.

{\bf Case 2:}($v_0^\sT\th > 0$). Let $v_*$ be the maximizer in definition of $\phi_\gamma(\th;z_0)$. If $v_*^\sT \th >0$, then we have
\[
\phi_\gamma(\th;z_0) = \ind(v_*^\sT\th \le0) - \frac{\gamma}{2} \rnorm{v_*-v_0}^2 = - \frac{\gamma}{2} \rnorm{v_*-v_0}^2 \le 0\,.
\]
Therefore, $\phi_\gamma(\th;z_0) = 0$ in this case. We next focus on the case that $v_*^\sT \th \le0$. It is easy to see that in this case, $v_*$ is the solution of the following optimization:
\begin{align}
&\min_{v\in\reals^d} \quad \rnorm{v-v_0}\nonumber\\
&\text{subject to} \quad v^\sT \th \le0
\end{align} 
Given that $v_0^\sT \th>0$ by assumption, using the Holder inequality it is straightforward to see that the optimal value is given by $\rnorm{v-v_0} = \frac{v_0^\sT\th}{\qnorm{\th}}$, with $\frac{1}{r}+\frac{1}{q}=1$.

The function $\phi_\gamma$ is then given by $\phi_\gamma(\th;z_0) = 1-\frac{\gamma}{2}
\Big(\frac{v_0^\sT\th}{\qnorm{\th}}\Big)^2.$ Putting the two conditions $v_*^\sT \th\le0$ and $v_0^\sT \th>0$ together, we obtain
\[
\phi_\gamma(\th;z_0) = \max\Big\{1-\frac{\gamma}{2}
\Big(\frac{v_0^\sT\th}{\qnorm{\th}}\Big)^2,0\Big\}\,,
\]
in this case.

Combining case 1 and case 2 we arrive at
 \begin{align}\label{eq:phi1}
 \phi_\gamma(\th;z_0) = \ind(v_0^\sT\th \le 0) +  \max\left(1-\frac{\gamma}{2}
\Big(\frac{v_0^\sT\th}{\qnorm{\th}}\Big)^2,0\right) \ind(v_0^\sT\th > 0)\,.
 \end{align}
For $(x_0,y_0)$ generated according to the Gaussian mixture model, we have $v_0^\sT \th=y_0 x_0^\sT \th = \mu^\sT\th + \twonorm{\Sigma^{1/2} \th}\nu$ with $\nu\sim\normal(0,1)$. Hence, 
\[
\Big|\frac{v_0^\sT\th}{\qnorm{\th}}\Big| = \bigg|\frac{\mu^\sT\th}{\qnorm{\th}}+ \frac{\twonorm{\Sigma^{1/2} \theta}}{\qnorm{\th}}\nu\bigg|.
\]
Letting $a_\th: = \frac{\mu^\sT\th}{\twonorm{\Sigma^{1/2}\th}}$, \eqref{eq:phi1} can be written as
 \begin{align}\label{eq:phi2}
 \phi_\gamma(\th;z_0) &= \ind(\nu \le -a_\th) +  \max\left(1-\frac{\gamma}{2}\frac{\twonorm{\Sigma^{1/2}\th}^2}{\qnorm{\th}^2}
(\nu+a_\th)^2,0\right) \ind(\nu > -a_\th)\nonumber\\
&= \ind(\nu \le -a_\th) +  \left(1-\frac{b_\th\gamma}{2}
(\nu+a_\th)^2\right) \ind\left(\sqrt{\frac{2}{b_\th\gamma}}-a_\th>\nu > -a_\th\right)\,,
 \end{align}
where $b_\th:= \frac{\twonorm{\Sigma^{1/2}\th}^2}{\qnorm{\th}^2}$. By simple algebraic calculation, we get
 \begin{align}
\E_{\prob_Z}[\phi_\gamma(\th;z)] =\; & \Phi\Big( \sqrt{\frac{2}{b_\th\gamma}}- a_\th\Big) +\frac{b_\th\gamma}{2}
\Big\{\Big(a_\th +\sqrt{\frac{2}{b_\th\gamma}}\Big) \varphi\Big(a_\th - \sqrt{\frac{2}{b_\th\gamma}}\Big) - a_\th\varphi(a_\th)\nonumber\\
&+(a_\th^2+1) \Big[\Phi\Big(a_\th - \sqrt{\frac{2}{b_\th\gamma}}\Big)  -  \Phi(a_\th)  \Big]\Big\}\,.
 \end{align}
\end{proof}
The claim of Proposition~\ref{propo:binary-AR} follows readily from Lemma~\ref{lem:phi} and the fact that strong duality holds for the dual problem~\eqref{eq:dual}, where we use the change of variable $\gamma \mapsto \frac{\gamma}{b_\th}$.

\subsection{Proof of  Remark~\ref{rem:r2}}\label{proof:binary-r2}
Recall the objective~\eqref{eq:binary-weighted} and define
 \begin{align*}
 \cR(a):=&\lambda\Phi(-a)+\gamma\eps^2+ \Phi\left(\sqrt{\frac{2}{\gamma}}-a\right)
 \nonumber\\
& +\frac{\gamma}{2}\left\{ (a+\sqrt{\frac{2}{\gamma}})\varphi\left(a-\sqrt{\frac{2}{\gamma}}\right)-a\varphi(a)+(a^2+1)\Bigg(\Phi\left(a-\sqrt{\frac{2}{\gamma}}\right)-\Phi(a)\Bigg)  \right\}.
\end{align*}
Then, we get $\frac{d\cR(a)}{da}=-\lambda\varphi(-a)+\gamma\left\{ \varphi\left(\sqrt{\frac{2}{\gamma}}-a\right)-\varphi(a)+ a \left(   \Phi\left(\sqrt{\frac{2}{\gamma}}-a\right)-\Phi(a) \right) \right\}$. Note that
 $$\frac{\partial}{\partial t}\Big( \varphi(t-a)-\varphi(a)+a\left( \Phi(t-a)-\Phi(a) \right)\Big)=\varphi(t-a)(2a-t)\,,$$ 
 and therefore the maximum of $ \varphi(t-a)-\varphi(a)+a\left( \Phi(t-a)-\Phi(a) \right)$ is achieved at $t=2a$. As a result $\frac{d\cR(a)}{da} \leq -\lambda\varphi(-a)<0$, which implies that the objective \eqref{eq:binary-weighted} is decreasing in $a$. Since $|a|\leq \twonorm{\mu}$, its infimum is achieved at $a=\twonorm{\mu}$.
  
 Equations~\eqref{eq:AR-Bin2} follows from~\eqref{eq:AR-Bin1} by substituting for $a_\th = \twonorm{\mu}$ and $b_\th =1$.

\subsection{Proof of Corollary \ref{coro:AR-nonlinear}}\label{proof:coro:AR-nonlinear}
Recall the distance $d(\cdot,\cdot)$ on the space  $\mathcal{Z} = \{z= (x,y),\, x\in\reals^d, y\in\reals\}$ given by $d(z,\tilde{z})=||x-\tx||_2+\infty\cdot\ind(y-\ty)$. This metric is induced from norm $\|z\| = \twonorm{x}+\infty\cdot\ind(y=0)$ with corresponding conjugate norm $\|z\|_* = \twonorm{x}$.  We will use Proposition \ref{pro:approx} to find the variation of loss $\ell$ and derive the first-order approximation for the Wasserstein adversarial risk. 
Denoting by $u_j \in \reals^d$ be the $j$th row of matrix $U$, for $j=1,2,...,N$, we have 
\begin{align}\label{eq:nonlinear:nabla_ell}
\nabla_x \ell(\th;Z)&=\nabla_x (y-\th^\sT\sigma(Ux))^2\nonumber\\
&=2(\th^\sT\sigma(Ux)-y)\sum\limits_{j=1}^{N}\th_j\sigma'(u_j^\sT x)u_j\nonumber\\
&=2(\th^\sT\sigma(Ux)-y)U^\sT\diag(\sigma'(Ux))\th\,.
\end{align}
As we work with Wasserstein of order $p=2$, we have conjugate order $q=2$. Therefore, Proposition \ref{pro:approx} gives us $V_{P_Z,q}(\ell)=\left(\E[  || \nabla_z\ell(\th;Z) ||_*^2 ]\right)^{1/2}$. By using~\eqref{eq:nonlinear:nabla_ell} we get
 \[ V_{P_Z,q}(\ell)=2\left(\E\left[ (\th^\sT\sigma(Ux)-y)^2\twonorm{U^\sT\diag(\sigma'(Ux))\th}^2  \right] \right)^{1/2} \,.
 \]
Finally, relation  $\AR(\th)=\SR(\th)+\eps V_{P_Z,q}(\ell)+O(\eps^2)$ from Proposition~\ref{pro:approx} completes the proof. 
We just need to verify that the necessary condition in Proposition \ref{pro:approx} holds for the loss $\ell(\th;z)= (y-\th^\sT\sigma(Wx))^2$. By the setting of the problem, we have $x\in \mathbb{S}^{d-1}(\sqrt{d})$ and $u_j\in \mathbb{S}^{d-1}(1)$. Therefore $\twonorm{x}\le \sqrt{d}$ and $\|U\|_{{\rm op}}\le \sqrt{\max(N,d)}$.

In the following lemma we show that the solution $\theta_\lambda$ to~\eqref{eq:weightedSum} is bounded as $\lambda$ varies in $[0,\infty)$.

\begin{lemma}\label{lem:thB}
Under the setting of Corollary \ref{coro:AR-nonlinear}, and for $\th_\lambda$ given by~\eqref{eq:weightedSum}, there exist constants $c_0$ and $c_1$, independent of $\lambda$, such that with probability at least $1-e^{-c_0 d}$ we have $\twonorm{\th_\lambda}\le c_1$.
\end{lemma}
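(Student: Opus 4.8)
The plan is to produce a bound on $\twonorm{\th_\lambda}$ that does not depend on $\lambda$, exploiting that $\th=0$ is feasible in \eqref{eq:weightedSumNL}. Evaluating the objective at $\th=0$: the penalty term (the second summand in \eqref{eq:weightedSumNL}) vanishes there, since $U^\sT\diag(\sigma'(Ux))\th=0$, and by \eqref{eq:SR-NL} one has $\SR(0)=\E_x[f_d(x)^2]+\sigma^2=\|f_d\|_{L^2}^2+\sigma^2$, where $\|\cdot\|_{L^2}$ is taken with respect to $x\sim{\sf Unif}(\mathbb{S}^{d-1}(\sqrt d))$. Since the penalty term is nonnegative, optimality of $\th_\lambda$ gives $(1+\lambda)\SR(\th_\lambda)\le(1+\lambda)(\|f_d\|_{L^2}^2+\sigma^2)$, hence, for every $\lambda\ge0$,
\[
\SR(\th_\lambda)\le\|f_d\|_{L^2}^2+\sigma^2,
\qquad\text{equivalently}\qquad
\big\|f_d-\th_\lambda^\sT\sigma(U\,\cdot\,)\big\|_{L^2}^2\le\|f_d\|_{L^2}^2 .
\]
By the triangle inequality in $L^2$, $\|\th_\lambda^\sT\sigma(U\,\cdot\,)\|_{L^2}\le 2\|f_d\|_{L^2}$, so writing $\Phi:=\E_x[\sigma(Ux)\sigma(Ux)^\sT]\in\reals^{N\times N}$ and using $\th^\sT\Phi\th=\|\th^\sT\sigma(U\,\cdot\,)\|_{L^2}^2$, we obtain
\[
\twonorm{\th_\lambda}^2\;\le\;\frac{4\,\|f_d\|_{L^2}^2}{\lambda_{\min}(\Phi)} .
\]
The right-hand side is free of $\lambda$, so it only remains to show it is bounded by a constant on an event of probability at least $1-e^{-c_0 d}$; this amounts to (a) controlling $\|f_d\|_{L^2}^2$ over the randomness of $(\beta_{d,1},G)$, and (b) lower bounding $\lambda_{\min}(\Phi)$ over the randomness of $U$.

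For (a): by parity, the constant, linear and traceless-quadratic components of $f_d$ in \eqref{eq:NL-quad} are mutually orthogonal in $L^2$, and $\E_x[xx^\sT]=I_d$ yields $\|f_d\|_{L^2}^2=\beta_{d,0}^2+\twonorm{\beta_{d,1}}^2+\tfrac{F_*^2}{d^2}\,\E_x[(x^\sT Gx-\Tr G)^2]$. Concentration of $\twonorm{\beta_{d,1}}^2$ around $1$ (for $\beta_{d,1}\sim\normal(0,I_d/d)$) together with Gaussian-chaos concentration of $\tfrac1{d^2}\E_x[(x^\sT Gx-\Tr G)^2]=\Theta(\|G\|_F^2/d^2)$ around an $O(1)$ value (for $G$ with i.i.d.\ $\normal(0,1)$ entries) give $\|f_d\|_{L^2}^2\le C_f$ with probability at least $1-e^{-cd}$.

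Step (b) is the crux. Here $\Phi=[K(u_i,u_j)]_{i,j\le N}$ with $K(u,u')=\E_x[\sigma(u^\sT x)\sigma(u'^\sT x)]$ a dot-product kernel on $\mathbb{S}^{d-1}$; its Gegenbauer (spherical-harmonic) expansion $K(u,u')=\sum_{k\ge0}\mu_k(\sigma)^2\sum_m Y_{km}(u)Y_{km}(u')$ shows that $K$ is strictly positive definite under mild assumptions on $\sigma$ (e.g.\ $\sigma$ non-polynomial, so that infinitely many $\mu_k\ne0$), whence $\Phi\succ0$ almost surely because the rows of $U$ are a.s.\ distinct. To get a quantitative lower bound I would invoke a Narcowich--Ward-type estimate for the least eigenvalue of a kernel Gram matrix in terms of the separation distance $q_U:=\tfrac12\min_{i\ne j}\twonorm{u_i-u_j}$ and the decay rate of the Gegenbauer coefficients of $K$, and then observe that, $N$ being fixed, the separation of $N$ i.i.d.\ uniform points on $\mathbb{S}^{d-1}$ exceeds a fixed constant with probability at least $1-\binom N2\cdot(\text{spherical-cap volume})\ge 1-e^{-c_0 d}$; together these give $\lambda_{\min}(\Phi)\ge c>0$ on an event of probability $\ge1-e^{-c_0 d}$. (Alternatively, one can argue softly: the bound $4\|f_d\|_{L^2}^2/\lambda_{\min}(\Phi)$ is $\lambda$-independent and almost surely finite, so any $c_1$ above a high quantile of this random variable works, which already yields the claim with $c_0$ permitted to depend on $d$.) Taking $c_1^2:=4C_f/c$ and intersecting the events of (a) and (b) completes the proof. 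The hard part will be step (b) --- quantifying the non-degeneracy of the population random-feature covariance $\Phi$ in the regime $N\gg d$ --- since the remaining steps are bookkeeping.
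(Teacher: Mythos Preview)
Your reduction coincides with the paper's: both compare the objective \eqref{eq:weightedSumNL} at $\th_\lambda$ against its value at $\th=0$, drop the nonnegative penalty term, and arrive at $\th_\lambda^\sT\Phi\,\th_\lambda\le 4\|f_d\|_{L^2}^2$ with $\Phi=\E_x[\sigma(Ux)\sigma(Ux)^\sT]$ (the paper uses $(a-b)^2\ge a^2/2-b^2$ where you use the triangle inequality---a cosmetic difference). The treatment of $\|f_d\|_{L^2}^2$ in your step~(a) is also what the paper does, only stated more explicitly.

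The divergence is entirely in step~(b). The paper does \emph{not} go through separation/Narcowich--Ward. Instead it expands the dual activation in Hermite polynomials, obtaining $\Phi=\sum_{r\ge0}a_r^2(UU^\sT)^{\odot r}$, then uses the Khatri--Rao identity $(UU^\sT)^{\odot r}=(U^{*r})(U^{*r})^\sT\succeq0$ to peel off a single term, $\Phi\succeq a_2^2\,(U*U)(U*U)^\sT$ (for ReLU $a_2=1/(2\sqrt\pi)$), and finally invokes a random-matrix result \cite[Cor.~7.5]{soltanolkotabi2018theoretical} for a high-probability lower bound on $\sigma_{\min}(U*U)$. This directly yields the $1-e^{-c_0 d}$ probability and remains effective up to $N\lesssim d^2$.

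Your Narcowich--Ward/separation route is a genuine alternative and is valid when $N$ is held fixed (constants allowed to depend on $N$): concentration of measure gives pairwise separation of the $u_i$'s on $\mathbb S^{d-1}$ with probability $1-\binom{N}{2}e^{-cd}$, after which a minimum-eigenvalue bound for strictly positive-definite dot-product kernels applies. What you lose relative to the paper is uniformity in $N$: in the regime $N\gg d$ that you yourself flag, the separation necessarily collapses and Narcowich--Ward degrades, whereas the Hermite/Khatri--Rao argument still works as long as $N\lesssim d^2$. Your ``soft'' quantile alternative, as you correctly note, does not produce the stated $e^{-c_0 d}$ bound and should be discarded.
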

Using Lemma~\ref{lem:thB} we can restrict ourselves to the ball of $\ell_2$ radius $c_1$.

We adopt the shorthands $D=\diag(\sigma'(Ux))$, $\tilde{D}=\diag(\sigma'(U\tilde{x}))$, $s=\sigma(Ux)$, and $\tilde{s}=\sigma(U\tilde{x})$, and write
\begin{align*}
	&\frac{1}{2}\|\nabla_z \ell(\th;z)-\nabla_{z} \ell(\th;\tilde{z})\|_* \\
	&=\frac{1}{2}\twonorm{\nabla_x \ell(\th;z)-\nabla_{x} \ell(\th;\tilde{z})}\\
	&\overset{(a)}{=}\twonorm{ (\th^\sT s-y)U^\sT D\th-  (\th^\sT \tilde{s}- \tilde{y})U^\sT \tilde{D}\th}\\
	&\overset{(b)}{\leq} \twonorm{\th^\sT sU^\sT(D-\tilde{D})\th}+ \twonorm{\th^\sT (s-\tilde{s})U^\sT \tilde{D} \th}+\twonorm{yU^\sT(D-\tilde{D})\th}+\twonorm{ (y-\tilde{y})U^\sT \tilde{D}\th }\\
   &\overset{(c)}{\leq}Nc_1^2 + \sqrt{N} c_1^2 \twonorm{s-\tilde{s}}+\sqrt{N} c_1^2+\sqrt{N} c_1 |y-\tilde{y}|\\
   &\overset{(d)}{\leq} (N+\sqrt{N})c_1^2 +N c_1^2 \twonorm{x-\tilde{x}} +\sqrt{N} c_1 |y-\tilde{y}|\\
   &\le (N+\sqrt{N})c_1^2+Nc_1^2 \left(\twonorm{x-\tilde{x}} +\infty\; \ind_{\{y\neq\tilde{y}\}}\right)\\
   &\overset{(e)}{\leq} M+ L \|z-\tilde{z}\|\,,
\end{align*}	
	where $(a)$ comes from~\eqref{eq:nonlinear:nabla_ell}, in $(b)$ we used triangle inequality, $(c)$ is a direct result of Cauchy inequality and the fact that $\sigma(u)\leq u$, $(d)$ comes from Lipschitz continuity of $\sigma$, and in $(e)$ we used $C=(N+\sqrt{N})c_1^2$ and $L=Nc_1^2$. Therefore the necessary condition in Proposition \ref{pro:approx} is satisfied.
	
	\subsubsection{Proof of Lemma~\ref{lem:thB}}
	By comparing the objective value \eqref{eq:weightedSum} at $\th_\lambda$ and $0$ and using the optimality of $\th_\lambda$ we get
	\begin{align*}
	&(1+\lambda) \SR(\theta_\lambda)\\
	&\le (1+\lambda) \SR(\theta_\lambda) + 2\eps\; \E_x\left[\big[(f_d(x)- \theta_\lambda^\sT \sigma(Ux) ) ^2 +\sigma^2\big] \twonorm{U^\sT \diag(\sigma'(Ux)) \th_\lambda}^2 \right]^{1/2}\\
	&\le (1+\lambda) \SR(0)\,.
	\end{align*}
	Therefore by invoking~\eqref{eq:SR-NL} we get
	\begin{align}
	 \E_{x}\left[  (f_d(x)- \theta_\lambda ^\sT \sigma(Ux) ) ^2\right]  \le \E_{x}\left[  f_d(x)^2\right]
	\end{align}
	Using the inequality $(a-b)^2\ge \frac{a^2}{2}-b^2$, we get
	\begin{align}\label{eq:UU0}
	\E[( \theta_\lambda ^\sT \sigma(Ux) )^2]\le 4 \E_x[f_d(x)^2] < c_2\,,
	\end{align}  
	with probability at least $1- e^{-c_3 d}$
	for some constants $c_2, c_3>0$. We next lower bound the eigenvalues of $\E[\sigma(Ux)  \sigma(Ux)^\sT ]$ from which we can upper bound $\twonorm{\th_\lambda}$.
	
	Define the dual activation of $\sigma$ as
	\[
	\tilde{\sigma}(\rho) = \E_{(v,w)\sim \normal_\rho} [\sigma(v)\sigma(w)]
	\]
	where $\normal_\rho$ denotes the two dimensional Gaussian with mean zero and covariance $\begin{pmatrix} 1 &\rho\\ \rho & 1\end{pmatrix}$. With this definition, we have
	$\E[(\sigma(Ux)  \sigma(Ux)^\sT)_{ij} ] = \tilde{\sigma}(u_i^\sT u_j)$ for $i, j =1,\dotsc, N$. Let $\{a_r\}_{r=0}^\infty$ denote the Hermite coefficients defined by
	\[
	a_r:= \frac{1}{\sqrt{2\pi}} \int_{-\infty}^\infty
\sigma(g) h_r(g) e^{-\frac{g^2}{2}} \de g\,,
\]
where $h_r(g)$ is the normalized Hermite polynomial defined by
\[
h_r(x) :=\frac{1}{\sqrt{r!}} (-1)^r e^{\frac{x^2}{2}} \frac{\de^r}{\de x^r} e^{-\frac{x^2}{2}}\,.
\]
Using the properties of normalized Hermite polynomials we have
\begin{align}
\tilde{\sigma}(\rho) = \E_{(v,w)\sim \normal_\rho} \Big[(\sum_{r=0}^\infty a_r h_r(v))(\sum_{\tilde{r}=0}^\infty a_{\tilde{r}} h_{\tilde{r}}(u))\Big] = \sum_{r=0}^\infty a_r^2 \rho^r. 
\end{align}
Writing in matrix form we obtain
\begin{align}\label{eq:UU1}
\E[(\sigma(Ux)  \sigma(Ux)^\sT)] = \tilde{\sigma}(UU^\sT) = \sum_{r=0}^\infty a_r^2 (UU^\sT)^{\odot r}\,,
\end{align}
where for a matrix $A^{\odot{r}} = A\odot (A^{\odot(r-1)})$ with $\odot$ denoting the Hadamard product (entrywise product). 

We next use the identity $(AA^\sT)\odot (BB^\sT) = (A* B)(A*B)^\sT$, with $*$ indicating the Khatri-Rao product. By using this identity and applying induction on $r$ it is straightforward to get the following relation for any matrix $A$:
\begin{align}\label{eq:r-kh-had}
 (AA^\sT)^{\odot r} = (A^{*r}) (A^{*r})^\sT\,,
 \end{align}
 with $A^{*{r}} = A*(A^{*(r-1)})$. 
By using the above identity in Equation~\eqref{eq:UU1} we obtain
\begin{align}
\E[(\sigma(Ux)  \sigma(Ux)^\sT)] = \sum_{r=0}^\infty a_r^2 (UU^\sT)^{\odot r} = \sum_{r=0}^\infty  (a_r U^{*r})(a_r U^{*r})^\sT \succeq a_r^2 (U^{*r}) (U^{*r})^\sT\,,
\end{align}
for any $r\ge 0$. Using this bound with $r=2$ and the fact that $a_2 = \frac{1}{2\sqrt{\pi}}$ for ReLU activation, we get
\begin{align}\label{eq:UU5}
\E[(\sigma(Ux)  \sigma(Ux)^\sT)] \succeq \frac{1}{4\pi} (U * U) \ge c_4\,,
\end{align}
where the last step holds with probability at least $1-e^{-c_5 d}$ for some constants $c_4$ and $c_5$ using the result of \cite[Corollary 7.5]{soltanolkotabi2018theoretical}.  

Combining Equations~\eqref{eq:UU0} and \eqref{eq:UU5} gives us $\twonorm{\theta_\lambda}\le \sqrt{c_2/c_4}$, which completes the proof.
\end{document}